\title{Conflict-Averse Gradient Descent \\
for Multi-task Learning}
\author{%
  $^\dagger$Bo Liu, $^\dagger$Xingchao Liu, $^\ddagger$Xiaojie Jin, $^{\dagger,\mathsection}$Peter Stone, $^\dagger$Qiang Liu \\
  $^\dagger$The University of Texas at Austin, $^{\mathsection}$Sony AI, $^\ddagger$Bytedance Research\\
  \texttt{\{bliu,xcliu,pstone,lqiang\}@cs.utexas.edu, xjjin0731@gmail.com} \\
}
\newcommand{\fs}[1]{\footnotesize $\pm$#1}
\newcommand{\mysubsection}[1]{\vspace{-3pt}\subsection{#1}\vspace{-3pt}}
\definecolor{myred}{rgb}{0.858, 0.1, 0.1}
\definecolor{myyellow}{rgb}{0.858, 0.6, 0.1}
\definecolor{mydarkred}{rgb}{0.69,0.0,0.1098}
\begin{document}

\maketitle

\begin{abstract}
The goal of multi-task learning is to enable more efficient learning than single task learning by sharing model structures for a diverse set of tasks.
A standard multi-task learning objective is to minimize the average loss across all tasks. While straightforward, using this objective often results in much worse final performance for each task than learning them independently.
A major challenge in optimizing a multi-task model is the \emph{conflicting gradients}, where gradients of different task objectives are not well aligned
so that following the average gradient direction can be detrimental to specific tasks' performance. 
Previous work has proposed several heuristics to manipulate the task gradients for mitigating this problem. But most of them lack convergence guarantee and/or could converge to any Pareto-stationary point.
In this paper, we introduce Conflict-Averse Gradient descent (CAGrad) which minimizes the 
average loss function, while leveraging 
the worst local improvement of individual tasks to regularize the algorithm trajectory.
CAGrad balances the objectives automatically and still provably converges to a minimum over the average loss. It includes the regular gradient descent (GD) and the multiple gradient descent algorithm (MGDA) in the multi-objective optimization (MOO) literature as special cases.
On a series of challenging multi-task supervised learning and reinforcement learning tasks, CAGrad achieves improved performance over prior state-of-the-art multi-objective gradient manipulation methods. Code is available at \url{https://github.com/Cranial-XIX/CAGrad}.
\end{abstract}

\section{Introduction}
Multi-task learning (MTL) refers to learning a single model that can tackle multiple different tasks~\citep{hashimoto2016joint, ruder2017overview, zhang2021survey, vandenhende2021multi}. By sharing parameters across tasks, MTL methods learn more efficiently with an overall smaller model size compared to learning with separate models~\citep{vandenhende2021multi, yang2020multi, misra2016cross}. Moreover, it has been shown that MTL could in principle improve the quality of the learned representation and therefore benefit individual tasks~\citep{swersky2013multi, zamir2018taskonomy, stein2020inadmissibility}. For example, an early MTL result by \cite{caruana1997multitask} demonstrated that training a neural network to recognize doors could be improved by simultaneously training it to recognize doorknobs.

However, learning multiple tasks simultaneously can be a challenging optimization problem because it involves multiple objectives~\citep{vandenhende2021multi}. The most popular MTL objective in practice is the average loss over all tasks. Even when this average loss is exactly the true objective (as opposed to only caring about a single task as in the door/doorknob example), directly optimizing the average loss could lead to undesirable performance, e.g. the optimizer struggles to make progress so the learning performance significantly deteriorates. A known cause of this phenomenon is the \emph{conflicting gradients}~\citep{yu2020gradient}: gradients from different tasks 1) may have varying scales with the largest gradient dominating the update, and 2) may point in different directions so that directly optimizing the average loss can be quite detrimental to a specific task's performance.

To address this problem, previous work either adaptively re-weights the objectives of each task based on heuristics~\citep{chen2018gradnorm, kendall2018multi} or seeks a better update vector~\citep{sener2018multi, yu2020gradient} by manipulating the task gradients. However, existing work often lacks convergence guarantees or only provably converges to any point on the Pareto set of the objectives. This means the final convergence point of these methods may largely depend on the initial model parameters. As a result, it is possible that these methods over-optimize one objective while overlooking the others (See Fig.~\ref{fig:toy}).

Motivated by the limitation of current methods, we introduce Conflict-Averse Gradient descent (CAGrad), which reduces the conflict among gradients and still provably converges to a minimum of the average loss. The idea of CAGrad is simple: it looks for an update vector that maximizes the worst local improvement of any objective in a neighborhood of the average gradient. In this way, CAGrad automatically balances different objectives and smoothly converges to an optimal point of the average loss. Specifically, we show that vanilla gradient descent (GD) and the multiple gradient descent algorithm (MGDA) are special cases of CAGrad (See Sec.~\ref{sec:CAGrad}). We demonstrate that CAGrad can improve over prior state-of-the-art gradient manipulation methods on a series of challenging multi-task supervised, semi-supervised, and reinforcement learning problems.

\begin{figure}[t]
    \centering
    \includegraphics[width=\textwidth]{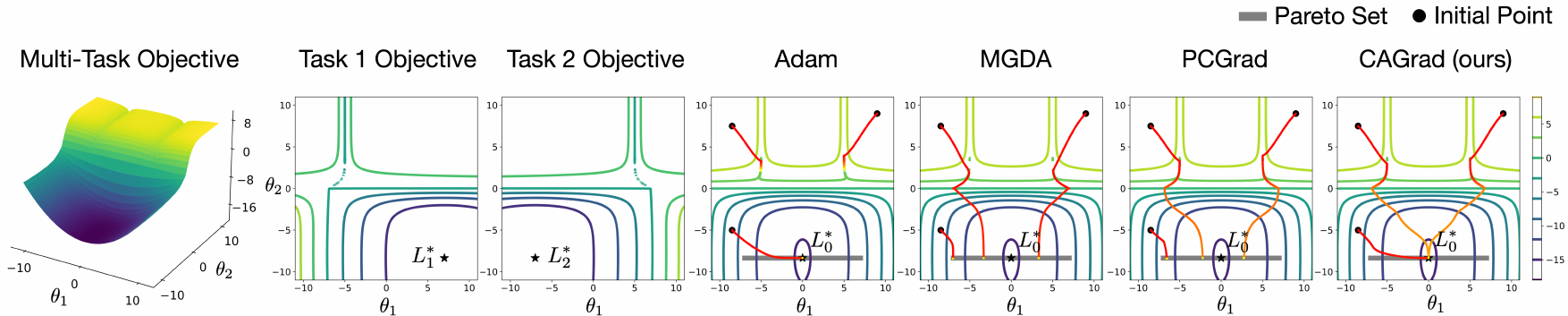}
    \vspace{-15pt}
    \caption{The optimization challenges faced by gradient descent (GD) and existing gradient manipulation methods like the multiple gradient descent algorithm (MGDA)~\citep{desideri2012multiple} and PCGrad~\citep{yu2020gradient}. MGDA, PCGrad and CAGrad are applied on top of the Adam optimizer~\citep{kingma2014adam}. For each methods, we repeat 3 runs of optimization from different initial points (labeled with $\bullet$). Each optimization trajectory is colored from \textcolor{mydarkred}{red} to \textcolor{myyellow}{yellow}. GD with Adam gets stuck on two of the initial points because the gradient of one task overshadows that of the other task, causing the algorithm to jump back and forth between the walls of a steep valley without making progress along the floor of the valley. MGDA and PCGrad stop optimization as soon as they reach the Pareto set.}
    \label{fig:toy}
    \vspace{-5pt}
\end{figure}
\section{Background}
In this section, we first introduce the problem setup of multi-task learning (MTL). Then we analyze the optimization challenge of MTL and discuss the limitation of prior gradient manipulation methods.

\mysubsection{Multi-task Learning and its Challenge}
\label{sec:setting}
In multi-task learning (MTL), we are given $K \geq 2$ different tasks, 
each of which is associated with a loss function $L_i(\theta)$ for 
a shared set of parameters $\theta$.
The goal is to find an optimal $\theta \in \RR^m$ that achieves low losses across all tasks. In practice, a standard objective for MTL is minimizing the average loss over all tasks:
\begin{equation}
\theta^* = \argmin_{\theta \in \RR^m}\left \{  L_0(\theta) \triangleq \frac{1}{K}\sum_{i=1}^K L_i(\theta) \right\}.
\label{eq:mtl-obj}
\end{equation}
Unfortunately, directly optimizing~\eqref{eq:mtl-obj} using gradient descent may significantly compromise the optimization of individual losses in practice. A major source of this phenomenon is known as the conflicting gradients~\citep{yu2020gradient}.

\textbf{Optimization Challenge: Conflicting Gradients}~~~
Denote by $g_i = \nabla L_i(\theta)$ the gradient of task $i$, and $g_0 = \nabla L_0(\theta) = \frac{1}{K} \sum_i^K g_i$ the averaged gradient. With learning rate $\alpha \in \RR^+$, $\theta \leftarrow \theta - \alpha g_0$ is the steepest descent update that appears to be the most natural update to follow when optimizing \eqref{eq:mtl-obj}. However, $g_0$ may conflict with individual gradients, i.e. $\exists~~i,~\langle g_i, g_0 \rangle <0$. When this conflict is large, following $g_0$ will decrease the performance on task $i$. As observed by \cite{yu2020gradient} and illustrated in Fig.~\ref{fig:toy}, when $\theta$ is near a steep ``valley", where a specific task's gradient dominates the update, manipulating the direction and magnitude of $g_0$ often leads to better optimization.

\mysubsection{Prior Attempts and Convergence Issues}
Several methods have been proposed to manipulate the task gradients to form a new update vector and have shown improved performance on MTL. Sener et al. apply the multiple-gradient descent algorithm (MGDA) \citep{desideri2012multiple} for MTL, which directly optimizes towards the Pareto set~\citep{sener2018multi}. Chen et al. dynamically re-weight each $L_i$ using a pre-defined heuristic~\citep{chen2018gradnorm}. More recently, PCGrad identifies conflicting gradients as the motivation behind manipulating the gradients and projects each task gradient to the normal plane of others to reduce the conflict~\citep{yu2020gradient}. While all these methods have shown success at improving the learning performance of MTL,
they manipulate the gradient without respecting the original objective~\eqref{eq:mtl-obj}. Therefore, these methods could in principle converge to any point in the Pareto set (See Fig.~\ref{fig:toy} and Sec.~\ref{sec:convergence}). We provide the detailed algorithms of MGDA and PCGrad in Appendix~\ref{apx:mgda} and~\ref{apx:pcgrad}, and a visualization of the update vector by each method in Fig.~\ref{fig:compare}.
\section{Method}
We introduce our main algorithm, Conflict-Averse Gradient descent in Sec.~\ref{sec:CAGrad}, and then show theoretical analysis in Sec.~\ref{sec:convergence}. 

\mysubsection{Conflict-Averse Gradient Descent}
\label{sec:CAGrad}
Assume we update $\theta$ by $\theta' \gets \theta - \alpha d$, where $\alpha$ is a step size and $d$ an update vector. We want to choose $d$ to decrease not only the average loss $L_0$, but also every individual loss. To do so, we consider the minimum decrease rate across the losses, 
\begin{equation}
    R(\theta, d) 
    = \max_{i \in [K]}\left \{  \frac{1}{\alpha} \left (L_i(\theta - \alpha d) - L_i(\theta) \right)  \right\} 
    \approx -  \min_{i\in[K]} \langle g_i,  d\rangle,
    \label{eq:conflict}
\end{equation}
where we use the first-order Taylor approximation assuming $\alpha$ is small. If $R(\theta, d) <0$, it means that all losses are decreased with the update given a sufficiently small $\alpha$. Therefore, $R(\theta, d)$ can be regarded as a measurement of conflict among objectives.

With the above measurement, our algorithm finds an update vector that minimizes such conflict to mitigate the optimization challenge while still converging to an optimum of the main objective $L_0(\theta)$.
To this end, we introduce Conflict-Averse Gradient descent (CAGrad), which on each optimization step determines the update $d$ by solving the following optimization problem:
\begin{equation}
    \max_{d\in \RR^m}\min_{i\in[K]} \langle g_i, d\rangle~~~~\text{s.t.}~~~~\norm{d - g_0} \leq c\norm{g_0},
    \label{eq:CAGrad}
\end{equation}
Here, $c \in [0, 1)$ is a pre-specified hyper-parameter that controls the convergence rate (See Sec.~\ref{sec:convergence}). 
The optimization problem ~\eqref{eq:CAGrad} looks for the best update vector within a local ball centered at the averaged gradient $g_0$, which also minimizes the conflict in losses measured by \eqref{eq:conflict}. Since we focus on MTL and choose the average loss as the main objective, $g_0$ is the average gradient. However, CAGrad also applies when $g_0$ is the gradient of some other user-specified objective.
We leave exploring this possibility as a future direction. 

\paragraph{Dual Objective} The optimization problem~\eqref{eq:CAGrad} involves decision variable $d$ that has the same dimension as the number of parameters in $\theta$, which could be millions for a deep neural network. 
It is not practical to directly solve for $d$ on every optimization step.
However, the dual problem of Eq.~\eqref{eq:CAGrad}, as we will derive in the following, only involves solving for a decision variable $w \in \mathbb{R}^K$, which can be efficiently found using standard optimization libraries~\citep{diamond2016cvxpy}. Specifically, first note that $\min_i \langle g_i, d\rangle = \min_{w \in \mathcal W} \langle \sum_i w_i g_i, d\rangle$, where $w = (w_1,\dots, w_K) \in \RR^K$ and $\mathcal W$ denotes the probability simplex, i.e. $\mathcal W = \{w \colon \sum_i w_i= 1~\text{and}~ w_i\geq 0\}$. Denote $g_w = \sum_i w_i g_i$ and $\phi = c^2 \norm{g_0}^2$.
The Lagrangian of the objective in Eq.~\eqref{eq:CAGrad} is
$$
\max_{d\in \mathbb{R}^m} \min_{\lambda \geq 0, w\in\mathcal{W}}  g_w ^\top d - \lambda (\norm{g_0 - d}^2 -  \phi)/2. 
$$
Since the objective for $d$ is concave with linear constraints, by switching the $\min$ and $\max$, we reach the dual form without changing the solution 
by Slater's condition:  
$$
 \min_{\lambda \geq 0, w \in \mathcal{W}}  \max_{d\in \mathbb{R}^m} g_w ^\top d - \lambda \norm{g_0 - d}^2/2 + \lambda  \phi /2.
$$
We end up with the following optimization problem w.r.t. $w$ after several steps of calculus, 
$$
 w^* = \argmin_{w \in \mathcal{W}}  g_w ^\top g_0    +  \sqrt{\phi} \norm{g_w},
$$
where the optimal $\lambda^* = \norm{g_{w^*}}/\phi^{1/2}$ and the optimal update $d^* = g_0 + g_{w^*}/\lambda^*$. The detailed derivation is provided in Appendix~\ref{apx:cagrad} and the entire CAGrad algorithm is summarized in Alg.~\ref{alg:caged}. 
The dimension of $w$ equals to the number of objectives $K$, which usually ranges from $2$ to tens and is much smaller than the number of parameters in a neural network. Therefore, in practice, we solve the dual objective to perform the update of CAGrad.
\begin{algorithm*}[t]
    \caption{Conflict-averse Gradient Descent (CAGrad) for Multi-task Learning}
    \begin{algorithmic} 
        \STATE \textbf{Input}: Initial model parameter vector $\theta_0$, differentiable loss functions $\{L_i\}_{i=1}^K$, a constant $c\in[0,1)$ and learning rate $\alpha \in \RR^+$.
        \REPEAT
            \STATE At the $t$-th optimization step, define $g_0 = \frac{1}{K}\sum_{i=1}^K\nabla L_i(\theta_{t-1})$ and $\phi = c^2\norm{g_0}^2$.
            \STATE Solve $$\min_{w \in \mathcal W} F(w) :=  g_w ^\top g_0 + \sqrt{\phi} \norm{g_w},~\text{where}~g_w = \sum_{i=1}^K w_i\nabla L_i(\theta_{t-1}).$$
            \STATE Update $\theta_t = \theta_{t-1} -\alpha \left (g_0 + \frac{\phi^{1/2}}{\norm{g_w}} g_w \right ).$
        \UNTIL{convergence}
    \end{algorithmic}
    \label{alg:caged}
\end{algorithm*}

\paragraph{Remark} 
 In Alg.~\ref{alg:caged}, when $c=0$, CAGrad recovers
 the typical gradient descent with $d = g_0$. 
 On the other hand, when $c \rightarrow \infty$, 
 then minimizing $F(w)$ is equivalent to $\min_w \norm{g_w}$. This coincides with the multiple gradient descent algorithm (MGDA) \citep{desideri2012multiple}, 
 which uses the minimum norm vector in the convex hull of the individual gradients as the update direction 
 (see Fig.~\ref{fig:compare}; second column). 
MGDA is a gradient-based multi-objective optimization 
designed to converge to an arbitrary point on the Pareto set, that is, it leaves all the points on the Pareto set as fixed points (and hence can not control which specific point it will converge to). It is different from our method which targets to minimize $L_0$ while using  gradient conflict to regularize the optimization trajectory. As we will analyze in the following section, to guarantee that CAGrad converges to an optimum of $L_0(\theta)$, we have to ensure $0 \leq c < 1$.

\mysubsection{Convergence Analysis}
\label{sec:convergence}
In this section we first formally introduce the related Pareto concepts and then analyze CAGrad's convergence property. Particularly, in Alg.~\ref{alg:caged}, when $c<1$, CAGrad is guaranteed to converge to a minimum point of the average loss $L_0$.

\textbf{Pareto Concepts}~~~Unlike single task learning where any two parameter vectors $\theta_1$ and $\theta_2$ can be ordered in the sense that either 
$L(\theta_1) \leq L(\theta_2)$  
or $L(\theta_1) \geq L(\theta_2)$ holds, 
MTL could have two parameter vectors where one performs better for task $i$ and the other performs better for task $j\neq i$. To this end,
we need the notion of Pareto-optimality~\citep{hochman1969pareto}.

\begin{mydef}[Pareto optimal and stationary points]
\label{def:pareto-optimality}
Let $\vv L(\theta) = \{L_i(\theta) \colon i \in [K]\}$ 
be a set of differentiable loss functions from $\RR^m$ to $\RR$.
For two points $\theta, \theta' \in \RR^m$,  
we say that $\theta$ is Pareto dominated by $\theta'$, denoted by $\vv L(\theta') \prec \vv L(\theta)$, if $L_i(\theta') \leq L_i(\theta)$ for all $i \in [K]$ and $\vv L(\theta') \neq \vv L(\theta)$. 
A point $\theta \in \RR^m$ is said to be Pareto-optimal if there   exists no    $\theta' \in \RR^m$ such that $\vv L(\theta') \prec \vv L(\theta)$. 
The set of all Pareto-optimal points is called the Pareto set. 
A point $\theta$ is called Pareto-stationary if  we have $\min_{w\in \mathcal W}  \norm{g_w(\theta)} = 0$, where $g_w(\theta) = \sum_{i=1}^K w_i \dd L_i(\theta),$ and $\mathcal W$  is the probability simplex on $[K].$
\end{mydef}

\begin{figure}[t]
    \centering
    \includegraphics[width=\textwidth]{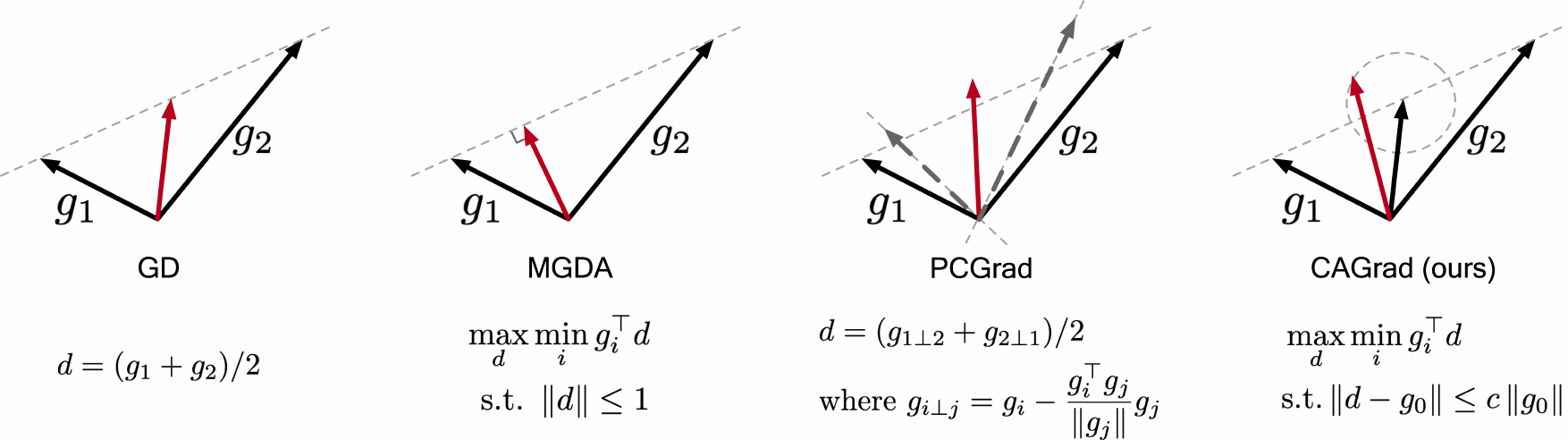}
    \vspace{-15pt}
    \caption{The combined update vector $d$ (in \textcolor{mydarkred}{red}) of a two-task learning problem with gradient descent (GD), multiple gradient descent algorithm (MGDA), PCGrad and Conflict-Averse Gradient descent (CAGrad). The two task-specific gradients are labeled $g_1$ and $g_2$. MGDA's objective is given in its primal form (See Appendix~\ref{apx:mgda}). For PCGrad, each gradient is first projected onto the normal plane of the other (the dashed arrows). Then the final update vector is the average of the two projected gradients. CAGrad finds the best update vector within a ball around the average gradient that maximizes the worse local improvement between task 1 and task 2.}
    \label{fig:compare}
\end{figure}

Similar to the case of single-objective differentiable optimization, 
a local Pareto optimal point $\theta$ must be Pareto stationary
(see e.g., \cite{desideri2012multiple}). 
%

\begin{thm}[Convergence of CAGrad]
\label{thm:cagrad}
Assume the individual loss functions $L_0, L_1,\ldots, L_K$ are differentiable on $\RR^m$ and their gradients $\nabla L_i(\theta)$ are all $H$-Lipschitz, i.e. $\norm{\nabla L_i(x) - \nabla L_i(y)} \leq H\norm{x-y}$ for $i=0,1,\ldots, K$ where $0 \leq H \leq \infty$.  Assume $L_0^* = \inf_{\theta\in\RR^m} L_0(\theta) > -\infty$. 

With a fixed step size $\alpha$ satisfying $0 < \alpha \leq 1/H$, 
we have for the CAGrad in Alg.~\ref{alg:caged}: 

1) For any $c \geq 1$, all the fixed points of CAGrad are  Pareto-stationary points of $(L_0,L_1, \ldots, L_K)$. 

2) In particular, if we take $0 \leq c < 1$, then CAGrad satisfies 
$$
\sum_{t=0}^T \norm{\dd L_0(\theta_t)}^2
\leq  \frac{2(L_0(\theta_0) - L_0^*)}{{\alpha} (1-c^2)}.
$$
\end{thm}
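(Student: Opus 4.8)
The plan is to prove part~2 by a standard smoothness-plus-telescoping argument, and the point worth noticing up front is that the quantitative bound does not use the max--min optimality of the CAGrad update at all --- only that the update vector stays inside the trust region of radius $c\norm{g_0}$ around $g_0$. Fix a step $t$ and abbreviate $g_0=\nabla L_0(\theta_{t-1})$, $\phi=c^2\norm{g_0}^2$, and let $d_t=g_0+\phi^{1/2}g_w/\norm{g_w}$ be the update of Alg.~\ref{alg:caged} (reading $d_t=g_0$ in the degenerate case $g_w=0$, which is still an optimal solution of~\eqref{eq:CAGrad}). Then $\norm{d_t-g_0}=\phi^{1/2}=c\norm{g_0}$, or $0$ in the degenerate case, so in every case $\norm{d_t-g_0}\le c\norm{g_0}$. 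Writing $\delta_t=d_t-g_0$, the only two facts I will use are $\norm{\delta_t}^2\le c^2\norm{g_0}^2$ and, by Cauchy--Schwarz, $\langle g_0,\delta_t\rangle\ge -c\norm{g_0}^2$.

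First I would apply the descent lemma to $L_0$, which is $H$-smooth since its gradient is $H$-Lipschitz: with $\theta_t=\theta_{t-1}-\alpha d_t$,
$$L_0(\theta_t)\le L_0(\theta_{t-1})-\alpha\langle g_0,d_t\rangle+\frac{H\alpha^2}{2}\norm{d_t}^2.$$
Next I would substitute $\langle g_0,d_t\rangle=\norm{g_0}^2+\langle g_0,\delta_t\rangle$ and $\norm{d_t}^2=\norm{g_0}^2+2\langle g_0,\delta_t\rangle+\norm{\delta_t}^2$, and use $\norm{\delta_t}^2\le c^2\norm{g_0}^2$ (legitimate since $\tfrac{H\alpha^2}{2}\ge 0$). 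A one-line rearrangement then gives
$$-\alpha\big(\norm{g_0}^2+\langle g_0,\delta_t\rangle\big)+\frac{H\alpha^2}{2}\big((1+c^2)\norm{g_0}^2+2\langle g_0,\delta_t\rangle\big)=-\frac{\alpha(1-c^2)}{2}\norm{g_0}^2+\alpha(H\alpha-1)\left(\frac{1+c^2}{2}\norm{g_0}^2+\langle g_0,\delta_t\rangle\right),$$
and the last term is $\le 0$: indeed $\alpha>0$; $H\alpha-1\le 0$ because $\alpha\le 1/H$; and $\tfrac{1+c^2}{2}\norm{g_0}^2+\langle g_0,\delta_t\rangle\ge\big(\tfrac{1+c^2}{2}-c\big)\norm{g_0}^2=\tfrac{(1-c)^2}{2}\norm{g_0}^2\ge 0$. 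Combining with the descent lemma, $L_0(\theta_t)\le L_0(\theta_{t-1})-\tfrac{\alpha(1-c^2)}{2}\norm{\nabla L_0(\theta_{t-1})}^2$.

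To finish, I would telescope this per-step inequality over $t=1,\dots,T+1$ and use $L_0(\theta_{T+1})\ge L_0^*$, obtaining $\tfrac{\alpha(1-c^2)}{2}\sum_{t=0}^{T}\norm{\nabla L_0(\theta_t)}^2\le L_0(\theta_0)-L_0^*$; since $0\le c<1$ the factor $\tfrac{\alpha(1-c^2)}{2}$ is strictly positive, and dividing through gives exactly the claimed bound. For part~1 ($c\ge 1$), a fixed point is a $\theta$ with $d_t=0$; from $d_t=g_0+\phi^{1/2}g_w/\norm{g_w}$ this forces either $g_0=0$ --- which is Pareto-stationary since $\min_{w\in\mathcal W}\norm{g_w}\le\norm{g_0}=0$ --- or $g_0=-\phi^{1/2}g_w/\norm{g_w}$, in which case $\frac1K\sum_i g_i$ and $\sum_i w_i g_i$ both appear with strictly positive weight in a vanishing linear combination of the $g_i$, so after renormalizing the weights onto the simplex we obtain a convex combination of the $g_i$ equal to zero, i.e. Pareto-stationarity.

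The calculation has no genuinely hard step, but two things need care. First, to land on the constant $1-c^2$ and not the weaker $1-c$, one must not use $\alpha H\le 1$ before collecting the $(1-c)^2$ coefficient --- this is precisely what the displayed identity above does, and it comes down to the elementary bound $c\le(1+c^2)/2$. Second, one should avoid routing the proof through the dual minimizer $g_w$: it can vanish even when $g_0\neq 0$ (for instance $g_1=(1,0)$, $g_2=(-1,0)$, $g_3=(0,1)$ forces $w^*=(1/2,1/2,0)$ and $g_{w^*}=0$ while $g_0=(0,1/3)$), which formally makes the update $0/0$; phrasing everything in terms of membership in the trust region --- which any update CAGrad produces satisfies --- bypasses this without any extra hypotheses.
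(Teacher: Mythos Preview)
Your proof is correct and follows essentially the same route as the paper: apply the descent lemma to $L_0$, use only the trust-region constraint $\|d-g_0\|\le c\|g_0\|$ (not the max--min optimality), and telescope; part~1 is likewise dispatched by reading off a vanishing convex combination from $d^*=g_0+\lambda g_{w^*}=0$. The paper's algebra is marginally slicker --- it applies $\alpha H\le 1$ \emph{first} and then uses the polarization identity $2g_0^\top d=\|g_0\|^2+\|d\|^2-\|g_0-d\|^2$ to reach $-\tfrac{\alpha}{2}(\|g_0\|^2-\|d-g_0\|^2)\le -\tfrac{\alpha(1-c^2)}{2}\|g_0\|^2$ in one line --- so your caution about the order in which $\alpha H\le 1$ is invoked turns out to be unnecessary.
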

This means that the algorithm converges to a stationary point of $\dd L_0$ if we take $0\leq c < 1$. 
The proof is in Appendix~\ref{apx:cagrad}. As we discuss earlier, 
unlike our method, MGDA is designed to converge to an arbitrary point on the Pareto set, without explicit control of which point it will converges to. 
Another algorithm with similar property is PCGrad \citep{yu2020gradient}, which is a gradient-based algorithm that mitigates the conflicting gradients problem by removing the conflicting components of each gradient with respect to the other gradients before averaging them to form the final update; see Fig.~\ref{fig:compare}, third column for the illustration.
Similar to MGDA, 
as shown in \citep{yu2020gradient}, 
PCGrad also converges to an arbitrary Pareto point without explicit control of which point it will arrive at. 

\mysubsection{Practical Speedup} 
\label{sec:speedup}
A typical drawback of methods that manipulate gradients is the computation overhead. For computing the optimal update vector, a method usually requires $K$ back-propagations to find all individual gradients $g_i$, in addition to the time required for optimization. This can be prohibitive for the scenario with many tasks. To this end, we propose to only sample a subset of tasks $S \subseteq [K]$, compute their corresponding gradients $\{g_i \mid i\in S\}$ and the averaged gradient $g_0$. Then we optimize $d$ in:
\begin{equation}
    \begin{split}
    \max_{d \in \mathbb{R}^m} \min \bigg(\langle \frac{K g_0 - \sum_{i \in S} g_i}{K-|S|}, d\rangle,~~\min_{i\in S} \langle g_i, d\rangle \bigg)~~~\text{s.t.}~~ \norm{d-g_0} \leq c\norm{g_0}
    \end{split}
    \label{eq:CAGrad-fast}
\end{equation}
\textbf{Remark} Note that the convergence guarantee in Thm.~\ref{thm:cagrad} still holds for Eq.~\ref{eq:CAGrad-fast} as the constraint does not change (See Appendix~\ref{apx:cagrad}). The time complexity is $\mathcal{O}((|S|N+T)$, where $N$ denotes the time for one pass of back-propagation and $T$ denotes the optimization time. For few-task learning ($K < 10$), usually $T\ll N$. When $S = [K]$, we recover the full CAGrad algorithm.

\section{Related Work}

\textbf{Multi-task Learning}
~~Due to its benefit with regards to data and computational efficiency, multi-task learning (MTL) has broad applications in vision, language, and robotics~\citep{hashimoto2016joint, ruder2017overview, liu2020certified, zhang2021survey, vandenhende2021multi}. 
A number of MTL-friendly architectures have been proposed using task-specific modules~\citep{misra2016cross, hashimoto2016joint}, attention-based mechanisms~\citep{liu2019end} or activating different paths along the deep networks to tackle MTL~\citep{rosenbaum2017routing, yang2020multi}. Apart from designing new architectures, another branch of methods focus on decomposing a large problem into smaller local problems that could be quickly learned by smaller models~\citep{rusu2015policy,parisotto2015actor,teh2017distral,ghosh2017divide}. Then a unified policy is learned from the smaller models using knowledge distillation~\citep{hinton2015distilling}.

\textbf{MTL Optimization} ~~In this work, we focus on the optimization challenge of MTL~\citep{vandenhende2021multi}.  Gradient manipulation methods are designed specifically to balance the learning of each task. The simplest form of gradient manipulation is to re-weight the task losses based on specific criteria, e.g., uncertainty~\citep{kendall2018multi}, gradient norm~\citep{chen2018gradnorm}, or difficulty~\citep{guo2018dynamic}. 
These methods are mostly heuristics and their performance can be unstable. Recently, two methods~\citep{sener2018multi, yu2020gradient} that manipulate the gradients to find a better local update vector have become popular. Sener et al~\citep{sener2018multi} view MTL as a multi-objective optimization problem, and use multiple gradient descent algorithm for optimization. PCGrad~\citep{yu2020gradient} identifies a major optimization challenge for MTL, the conflicting gradients, and proposes to project each task gradient to the normal plane of other task gradients before combining them together to form the final update vector. Though yielding good empirical performance, both methods can only guarantee convergence to a Pareto-stationary point, but not knowing where it exactly converges to. More recently, GradDrop~\citep{chen2020just} randomly drops out task gradients based on how much they conflict. IMTL-G~\citep{liu2020towards} seeks an update vector that has equal projections on each task gradient. RotoGrad~\cite{javaloy2021rotograd} separately scales and rotates task gradients to mitigate optimization conflict.

Our method, CAGrad, also manipulates the gradient to find a better optimization trajectory. 
Like other MTL optimization techniques, CAGrad is model-agnostic. However, unlike prior methods, CAGrad converges to the optimal point in theory and achieves better empirical performance on both toy multi-objective optimization tasks and real-world applications.
\section{Experiment}
\label{sec:experiment}
We conduct experiments to answer the following questions: 

\noindent\textbf{Question (1)}  Do CAGrad, MGDA and PCGrad behave consistently with their theoretical properties in practice? (yes)

\noindent\textbf{Question (2)}   
Does CAGrad recover GD and MGDA when varying the constant $c$? (yes)

\noindent\textbf{Question  (3)}  How does CAGrad perform in both performance and computational efficiency compared to prior state-of-the-art methods, on challenging multi-task learning problems under the supervised, semi-supervised and reinforcement learning settings? (CAGrad improves over prior state-of-the-art methods under all settings)

\mysubsection{Convergence and Ablation over c}
\label{sec:toy_experiment}
To answer questions \textbf{(1)} and \textbf{(2)}, we create a toy optimization example to evaluate the convergence of CAGrad 
compared to MGDA and PCGrad. 
On the same toy example, 
we ablate over the constant $c$
and show that CAGrad recovers GD and MGDA
with proper $c$ values. 
Next, to test CAGrad on  more complicated neural  models, 
we perform the same set of experiments on the  Multi-Fashion+MNIST benchmark~\citep{lin2019pareto} 
with a shrinked LeNet architecture~\citep{lecun1998gradient} 
(in which each layer has a reduced number of neurons compared to the original LeNet). Please refer to Appendix~\ref{apx:exp} for more details.

For the toy optimization example, we modify the toy example used by Yu et al.~\cite{yu2020gradient} and consider $\theta=(\theta_1, \theta_2) \in \RR^2$ with the following individual loss functions:
\begin{flalign*}
    L_1(\theta) &= c_1(\theta)f_1(\theta) + c_2(\theta)g_1(\theta)~~\text{and}~~L_2(\theta) = c_1(\theta)f_2(\theta) + c_2(\theta)g_2(\theta),~\text{where}\\
    f_1(\theta) &= \log{\big(\max(|0.5(-\theta_1-7)-\tanh{(-\theta_2)}|,~~0.000005)\big)} + 6, \\
    f_2(\theta) &= \log{\big(\max(|0.5(-\theta_1+3)-\tanh{(-\theta_2)}+2|,~~0.000005)\big)} + 6, \\
    g_1(\theta) &= \big((-\theta_1+7)^2 + 0.1*(-\theta_2-8)^2\big)/10-20, \\
    g_2(\theta) &= \big((-\theta_1-7)^2 + 0.1*(-\theta_2-8)^2)\big/10-20, \\
    c_1(\theta) &= \max(\tanh{(0.5*\theta_2)},~0)~~\text{and}~~c_2(\theta) = \max(\tanh{(-0.5*\theta_2)},~0).
\end{flalign*}
The average loss $L_0$ and individual losses $L_1$ and $L_2$ are shown in Fig.~\ref{fig:toy}. We then pick 5 initial parameter vectors $\theta_\text{init}\in\{(-8.5, 7.5), (-8.5, 5), (0,0), (9,9), (10, -8)\}$ and plot the corresponding optimization trajectories with different methods in Fig.~\ref{fig:toy_exp}.
\begin{figure}[t]
    \centering
    \includegraphics[width=\textwidth]{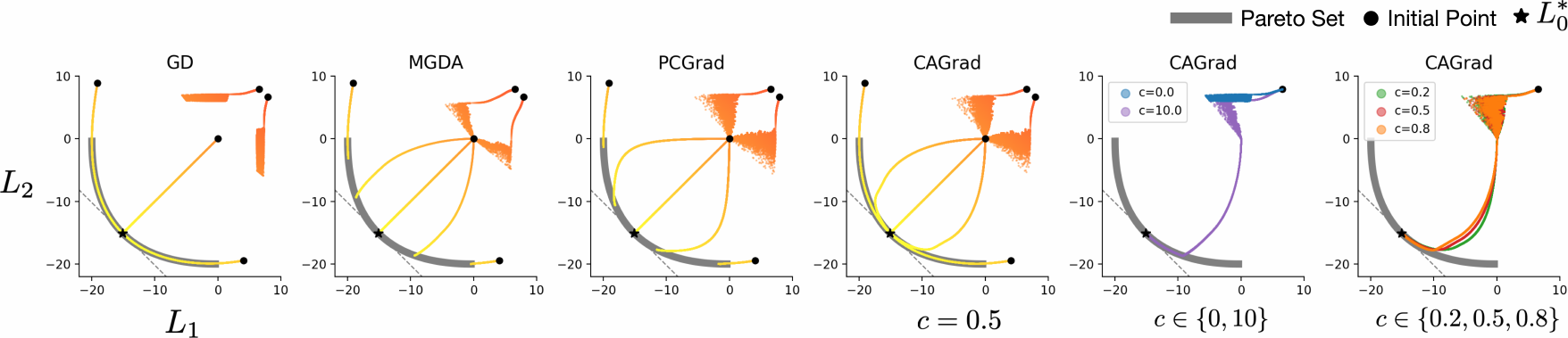}
    \vspace{-15pt}
    \caption{The left four plots are 5 runs of each algorithms from 5 different initial parameter vectors, where trajectories are colored from \textcolor{mydarkred}{red} to \textcolor{myyellow}{yellow}. The right two plots are CAGrad's results with a varying $c \in \{0,0.2,0.5,0.8,10\}$.}
    \label{fig:toy_exp}
    \vspace{-10pt}
\end{figure}
As shown in Fig.~\ref{fig:toy_exp}, GD gets stuck in 2 out of the 5 runs while other methods all converge to the Pareto set. MGDA and PCGrad converge to different Pareto-stationary points depending on $\theta_\text{init}$. CAGrad with $c=0$ recovers GD and CAGrad with $c=10$ approximates MGDA well (in theory it requires $c \rightarrow \infty$ to exactly recover MGDA).

Next, we apply the same set of experiments on the multi-task classification benchmark Multi-Fashion+MNIST~\citep{lin2019pareto}. This benchmark consists of images that are generated by overlaying an image from FashionMNIST dataset~\citep{xiao2017fashion} on top of another image from MNIST dataset~\citep{deng2012mnist}. The two images are positioned on the top-left and bottom-right separately. We consider a shrinked LeNet as our model, and train it with Adam~\citep{kingma2014adam} optimizer with a $0.001$ learning rate for 50 epochs using a batch size of 256. Due to the highly non-convex nature of the neural network, we are not able to visualize the entire Pareto set. But we provide the final training losses of different methods over three independent runs in Fig.~\ref{fig:mnist}. As shown, CAGrad achieves the lowest average loss with $c=0.2$. In addition, PCGrad and MGDA focus on optimizing task 1 and task 2 separately. Lastly, CAGrad with $c=0$ and $c=10$ roughly recovers the final performance of GD and MGDA. By increasing $c$, the model performance shifts from more GD-like to more MGDA-like, though due to the non-convex nature of neural networks, CAGrad with $0\leq c < 1$ does not necessarily converge to the exact same point.

\begin{figure}[ht]
    \centering
    \includegraphics[width=\textwidth]{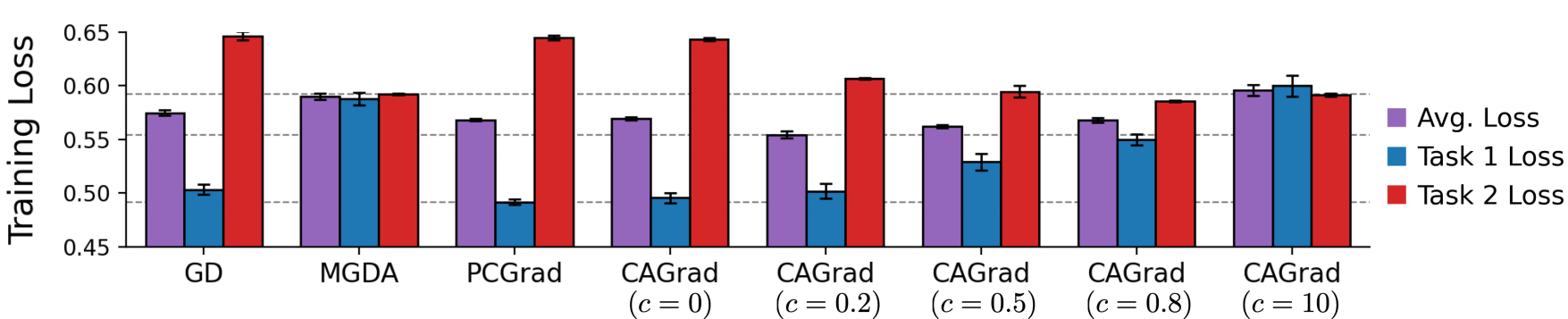}
    \vspace{-15pt}
    \caption{The average and individual training losses on the Fashion-and-MNIST benchmark by running GD, MGDA, PCGrad and CAGrad with different $c$ values. GD gets stuck at the steep valley (the area with a cloud of dots), which other methods can pass. MGDA and PCGrad converge randomly on the Pareto set.}
    \label{fig:mnist}
\end{figure}
\vspace{-4pt}
\mysubsection{Multi-task Supervised Learning}
To answer question \textbf{(3)} in the supervised learning setting, we follow the experiment setup from Yu et al.~\cite{yu2020gradient} and consider the NYU-v2 and CityScapes vision datasets. NYU-v2 contains 3 tasks: 13-class semantic segmentation, depth estimation, and surface normal prediction. CityScapes similarly contains 2 tasks: 7-class semantic segmentation and depth estimation. Here, we follow ~\cite{yu2020gradient} and combine CAGrad with a state-of-the-art MTL method MTAN~\citep{liu2019end}, which applies attention mechanism on top of the SegNet architecture~\citep{badrinarayanan2017segnet}. We compare CAGrad with PCGrad, vanilla MTAN and Cross-Stitch~\citep{misra2016cross}, which is another MTL method that modifies the network architecture. MTAN originally experiments with equal loss weighting and two other dynamic loss weighting heuristics~\citep{kendall2018multi,chen2018gradnorm}. For a fair comparison, all methods are applied under the equal weighting scheme and we use the same training setup from~\citep{chen2018gradnorm}. We search $c \in \{0.1, 0.2, \dots 0.9\}$ with the best average training loss for CAGrad on both datasets ($0.4$ for NYU-v2 and $0.2$ for Cityscapes). We perform a two-tailed, Student's $t$-test under \emph{equal sample sizes, unequal variance} setup and mark the results that are significant with an $*$. Following Maninis et al.\cite{maninis2019attentive}, we also compute the average per-task performance drop of method $m$ with respect to the single-tasking baseline $b$: $\Delta m = \frac{1}{K}\sum_{i=1}^K (-1)^{l_i} (M_{m,i} - M_{b,i})/M_{b,i}$ where $l_i = 1$ if a higher value is better for a criterion $M_i$ on task $i$ and $0$ otherwise. The single-tasking baseline (independent) refers to training individual tasks with a vanilla SegNet. Results are shown in Tab.~\ref{tab:nyu-v2} and Tab.~\ref{tab:cityscapes}.

\begin{table*}[ht]
    \centering
    \resizebox{\textwidth}{!}{%
    \begin{tabular}{ccrrrrrrrrrc}
    \toprule
     & &  \multicolumn{2}{c}{Segmentation} & \multicolumn{2}{c}{Depth} & \multicolumn{5}{c}{Surface Normal} & \\
    \cmidrule(lr){3-4}\cmidrule(lr){5-6}\cmidrule(lr){7-11}
     \#P. & Method &  \multicolumn{2}{c}{(Higher Better)} &  \multicolumn{2}{c}{(Lower Better)}  & \multicolumn{2}{c}{\makecell{Angle Distance\\(Lower Better)}} & \multicolumn{3}{c}{\makecell{Within $t^\circ$\\(Higher Better)}} & $\Delta m \% \downarrow$\\
     & & mIoU  & Pix Acc  & Abs Err & Rel Err & Mean & Median & 11.25 & 22.5 & 30 & \\
     \midrule
     3 & Independent & 38.30 & 63.76 & 0.6754 & 0.2780 & 25.01 & 19.21 & 30.14 & 57.20 & 69.15 & \\
     \midrule
     $\approx$3 & Cross-Stitch~\citep{misra2016cross} & 37.42 & 63.51 & 0.5487 & \textbf{0.2188} & ${}^*$28.85 & ${}^*$24.52 & ${}^*$22.75 & ${}^*$46.58 & ${}^*$59.56 & 6.96\\
    [0.15cm]
     $1.77$ & MTAN~\citep{liu2019end} & 39.29 & 65.33 & 0.5493 & 0.2263 & ${}^*$28.15 & ${}^*$23.96 & ${}^*$22.09 & ${}^*$47.50 & ${}^*$61.08 & 5.59\\
    [0.15cm]
     $1.77$ & MGDA~\citep{sener2018multi} & ${}^*$30.47 & ${}^*$59.90 & ${}^*$0.6070 & ${}^*$0.2555 & \textbf{24.88} & \textbf{19.45} & \textbf{29.18} & \textbf{56.88} & \textbf{69.36} & 1.38\\
    [0.15cm]
     $1.77$ & PCGrad~\citep{yu2020gradient} & 38.06 & 64.64 & 0.5550 & 0.2325 & ${}^*$27.41 & ${}^*$22.80 & ${}^*$23.86 & ${}^*$49.83 & ${}^*$63.14 & 3.97\\
    [0.15cm]
     $1.77$ & GradDrop~\citep{chen2020just} & 39.39 & 65.12 & \textbf{0.5455} & 0.2279 & ${}^*$27.48 & ${}^*$22.96 & ${}^*$23.38 & ${}^*$49.44 & ${}^*$62.87 & 3.58\\
    [0.15cm]
     $1.77$ & CAGrad (ours) & \textbf{39.79} & \textbf{65.49} & 0.5486 & 0.2250 & 26.31 & 21.58 & 25.61 & 52.36 & 65.58 & \textbf{0.20}\\
    \bottomrule 
    \end{tabular}
    }
    \caption{Multi-task learning results on NYU-v2 dataset. \#P denotes the relative model size compared to the vanilla SegNet. Each experiment is repeated over 3 random seeds and the mean is reported. The best average result among all multi-task methods is marked in bold. MGDA, PCGrad, GradDrop and CAGrad are applied on the MTAN backbone. CAGrad has statistically significant improvement over baselines methods with an $*$, tested with a $p$-value of 0.1.}
    \label{tab:nyu-v2}
\end{table*}

\begin{table*}[ht]
    \centering
    \resizebox{0.65\textwidth}{!}{%
    \begin{tabular}{ccrrrrc}
    \toprule
     & & \multicolumn{2}{c}{Segmentation} & \multicolumn{2}{c}{Depth} & \\
    \cmidrule(lr){3-4}\cmidrule(lr){5-6}
     \#P. & Method &  \multicolumn{2}{c}{(Higher Better)} &  \multicolumn{2}{c}{(Lower Better)} & $\Delta m \% \downarrow$ \\
     & & mIoU  & Pix Acc  & Abs Err & Rel Err & \\
     \midrule
    2 & Independent & 74.01 & 93.16 & 0.0125 & 27.77 & \\
     \midrule
    $\approx$3 & Cross-Stitch~\citep{misra2016cross} & ${}^*$73.08 & ${}^*$92.79 & ${}^*$0.0165 & ${}^*$118.5 & 90.02\\
    [0.15cm]
    $1.77$ & MTAN~\citep{liu2019end}  & 75.18 & 93.49 & ${}^*$0.0155 & ${}^*$46.77 & 22.60\\
    [0.15cm]
    $1.77$ & MGDA~\citep{sener2018multi}  & ${}^*$68.84 & ${}^*$91.54 & 0.0309 & \textbf{33.50} & 44.14\\
    [0.15cm]
    $1.77$ & PCGrad~\citep{yu2020gradient} & 75.13 & 93.48 & 0.0154 & 42.07  & 18.29\\
    [0.15cm]
    $1.77$ & GradDrop~\citep{chen2020just}  & \textbf{75.27} & \textbf{93.53} & ${}^*$0.0157 & ${}^*$47.54 & 23.73\\
    [0.15cm]
    $1.77$ & CAGrad (ours) & 75.16 & 93.48 & \textbf{0.0141} & 37.60 & \textbf{11.64} \\
    \bottomrule
    \end{tabular}
    }
    \caption{Multi-task learning results on CityScapes Challenge. Each experiment is repeated over 3 random seeds and the mean is reported. The best average result among all multi-task methods is marked in bold. PCGrad and CAGrad are applied on the MTAN backbone. CAGrad has statistically significant improvement over baselines methods with an $*$, tested with a $p$-value of 0.1.}
    \label{tab:cityscapes}
    \vspace{-5pt}
\end{table*}
Given the single task performance, CAGrad performs better on the task that is overlooked by other methods (Surface Normal in NYU-v2 and Depth in CityScapes) and matches other methods' performance on the rest of the tasks. We also provide the final test losses and the per-epoch training time of each method in Fig.~\ref{fig:nyu-comparison} in Appendix~\ref{apx:exp_sl}.

\mysubsection{Multi-task Reinforcement Learning}
To answer question \textbf{(3)} in the reinforcement learning (RL) setting, we apply CAGrad on the MT10 and MT50 benchmarks from the Meta-World environment~\citep{yu2020meta}. In particular, MT10 and MT50 contains 10 and 50 robot manipulation tasks. Following \cite{sodhani2021multi}, we use Soft Actor-Critic (SAC)~\citep{haarnoja2018soft} as the underlying RL training algorithm. We compare against Multi-task SAC (SAC with a shared model), Multi-headed SAC (SAC with a shared backbone and task-specific head), Multi-task SAC + Task Encoder (SAC with a shared model and the input includes a task embedding)~\citep{yu2020meta} and PCGrad~\citep{yu2020gradient}. We also compare with Soft Modularization~\citep{yang2020multi} that routes different modules in a shared model to form different policies. Lastly, we also include a recent method (CARE) that considers language metadata and uses a mixture of expert encoder for MTL. We follow the same experiment setup from~\citep{sodhani2021multi}. The results are shown in Tab.~\ref{tab:metaworld}. CAGrad outperforms all baselines except for CARE which benefits from extra information from the metadata. We also apply the practical speedup in Sec.~\ref{sec:speedup} and sub-sample 4 and 8 tasks for MT10 and MT50 (CAGrad-Fast). CAGrad-fast achieves comparable performance against the state-of-the-art method while achieving a $2$x (MT10) and $5$x (MT50) speedup over PCGrad. We provide a visualization of tasks from MT10 and MT50, and the comparison of computational efficiency in Appendix~\ref{apx:exp_rl}.

\begin{table}[ht]
    \vspace{-10pt}
    \centering
    \resizebox{0.8\textwidth}{!}{%
    \begin{tabular}{lcc}
    \toprule
    & Metaworld MT10 & Metaworld MT50 \\
    \cmidrule(lr){2-2}\cmidrule(lr){3-3}
    Method & success & success\\
    & (mean $\pm$ stderr) & (mean $\pm$ stderr) \\
    \midrule
    Multi-task SAC~\citep{yu2020meta}                 & 0.49 \fs{0.073}                    & 0.36 \fs{0.013}\\
    Multi-task SAC + Task Encoder~\citep{yu2020meta}  & 0.54 \fs{0.047}                    & 0.40 \fs{0.024}\\
    Multi-headed SAC~\citep{yu2020meta}               & 0.61 \fs{0.036}                    & 0.45 \fs{0.064}\\
    PCGrad~\citep{yu2020gradient}                     & 0.72 \fs{0.022}                    & 0.50 \fs{0.017}\\
    Soft Modularization~\citep{yang2020multi}         & 0.73 \fs{0.043}                    & 0.50 \fs{0.035}\\
    CAGrad (ours)                                     & \textbf{0.83} \fs{0.045}           & \textbf{0.52} \fs{0.023} \\
    CAGrad-Fast (ours)                                & 0.82 \fs{0.039}                    & 0.50 \fs{0.016} \\
    \midrule
    CARE~\citep{sodhani2021multi}                     & 0.84 \fs{0.051}                    & 0.54 \fs{0.031}\\
    One SAC agent per task (upper bound)              & 0.90 \fs{0.032}                    & 0.74 \fs{0.041}\\
    \bottomrule
    \end{tabular}
    }
    \vspace{5pt}
    \caption{Multi-task reinforcement learning results on the Metaworld benchmarks. Results are averaged over 10 independent runs and the best result is marked in bold.}
    \label{tab:metaworld}
\end{table}
\mysubsection{Semi-supervised Learning with Auxiliary Tasks}
Training with auxiliary tasks to improve the performance of a main task is another popular application of MTL. Here, we take semi-supervised learning as an instance. We combine different optimization algorithms with Auxiliary Task Reweighting for Minimum-data Learning (ARML)~\citep{shi2020auxiliary}, a state-of-the-art semi-supervised learning algorithm. The loss function is composed of the main task and two auxiliary tasks:

\begin{equation}
L_0 = L_{CE}(\theta; D_{l}) + w_1 L_{aux}^1(\theta; D_{u}) + w_2 L_{aux}^2 (\theta; D_u),
\end{equation}

where $L_{CE}$ is the main cross-entropy classification loss on the labeled dataset $D_l$, and $L_{aux}^1, L_{aux}^2$ are auxiliary unsupervised learning losses on the unlabeled dataset $D_u$. We use the same $w_1$ and $w_2$ from ARML, and use the CIFAR10 dataset~\citep{krizhevsky2009learning}, which contains 50,000 training images and 10,000 test images. 10\% of the training images is held out as  the validation set. We test PCGrad, MGDA and CAGrad with 500, 1000 and 2000 labeled images. The rest of the training set is used for auxiliary tasks. For all the methods, we use the same labeled dataset, the same learning rate and train them for 200 epochs with the Adam~\citep{kingma2014adam} optimizer. Please refer to Appendix~\ref{apx:exp_ssl} for more experimental details. Results are shown in Tab.~\ref{tab:ssl}. With all the different number of labels, CAGrad yields the best averaged test accuracy. We observed that MGDA performs much worse than the ARML baseline, because it significantly overlooks the main classification task. We also compare different gradient manipulation methods on the same task with GradNorm~\citep{chen2018gradnorm}, which dynamically adjusts $w_1$ and $w_2$ during training. The results and conclusions are similar to those for ARML. 
\begin{table}[ht]
    \centering
    \begin{tabular}{lccc}
    \toprule
    Method & 500 labels & 1000 labels & 2000 labels \\
    \midrule
    ARML~\citep{shi2020auxiliary} & 67.05 \fs{0.16} & 73.22 \fs{0.26} & 81.35 \fs{0.36} \\
    ARML + PCGrad~\citep{yu2020gradient} & 67.49 \fs{0.64} & 73.23 \fs{0.62} & 81.91 \fs{0.19} \\
    ARML + MGDA~\cite{sener2018multi} & 49.27 \fs{0.68} & 60.11 \fs{2.35} & 60.78 \fs{0.17} \\
    ARML + CAGrad (Ours) & \textbf{68.25} \fs{0.37} & \textbf{74.37} \fs{0.42} & \textbf{82.81} \fs{0.48} \\
    \midrule
    GradNorm ~\citep{chen2018gradnorm} & 67.35 \fs{0.15} & 73.53 \fs{0.23} & 81.03 \fs{0.71} \\
    GradNorm + PCGrad~\citep{yu2020gradient} & \textbf{67.83} \fs{0.19} & 73.91 \fs{0.09} & 82.72 \fs{0.19} \\
    GradNorm + MGDA~\cite{sener2018multi} & 36.99 \fs{2.11} & 57.94 \fs{0.92} & 59.12 \fs{0.63} \\
    GradNorm + CAGrad (Ours) & 67.53 \fs{0.26} & \textbf{74.72} \fs{0.19} & \textbf{83.15} \fs{0.56} \\
    \bottomrule
    \end{tabular}
    \vspace{5pt}
    \caption{Semi-supervised Learning with auxiliary tasks on CIFAR10. We report the average test accuracy over 3 independent runs for each method and mark the best result in bold.}
    \label{tab:ssl}
\end{table}
\vspace{-20pt}
\section{Conclusion}
\label{sec:conclusion}
In this work, we introduce the Conflict-Averse Gradient descent (CAGrad) algorithm that explicitly optimizes the minimum decrease rate of any specific task's loss while still provably converging to the optimum of the average loss. CAGrad generalizes the gradient descent and multiple gradient descent algorithm, and demonstrates improved performance across several challenging multi-task learning problems compared to the state-of-the-art methods. While we focus mainly on optimizing the average loss, an interesting future direction is to look at main objectives other than the average loss under the multi-task setting.


\newpage
\section*{Acknowledgements}
The research was conducted in the statistical learning and AI group (SLAI) and the Learning Agents
Research Group (LARG) in computer science at UT Austin. SLAI research is supported in part by CAREER-1846421, SenSE-2037267, EAGER-2041327, and Office of Navy Research, and NSF AI Institute for Foundations of Machine Learning (IFML). LARG research is supported in part
by NSF (CPS-1739964, IIS-1724157, FAIN-2019844), ONR (N00014-18-2243),
ARO (W911NF-19-2-0333), DARPA, Lockheed Martin, GM, Bosch, and UT
Austin's Good Systems grand challenge.  Peter Stone serves as the
Executive Director of Sony AI America and receives financial
compensation for this work.  The terms of this arrangement have been
reviewed and approved by the University of Texas at Austin in accordance
with its policy on objectivity in research.
Xingchao Liu is supported in part by a funding from BP.

\bibliography{main.bib}

\begin{thebibliography}{10}

\bibitem{badrinarayanan2017segnet}
Vijay Badrinarayanan, Alex Kendall, and Roberto Cipolla.
\newblock Segnet: A deep convolutional encoder-decoder architecture for image
  segmentation.
\newblock {\em IEEE transactions on pattern analysis and machine intelligence},
  39(12):2481--2495, 2017.

\bibitem{caruana1997multitask}
Rich Caruana.
\newblock Multitask learning.
\newblock {\em Machine learning}, 28(1):41--75, 1997.

\bibitem{chen2018gradnorm}
Zhao Chen, Vijay Badrinarayanan, Chen-Yu Lee, and Andrew Rabinovich.
\newblock Gradnorm: Gradient normalization for adaptive loss balancing in deep
  multitask networks.
\newblock In {\em International Conference on Machine Learning}, pages
  794--803. PMLR, 2018.

\bibitem{chen2020just}
Zhao Chen, Jiquan Ngiam, Yanping Huang, Thang Luong, Henrik Kretzschmar, Yuning
  Chai, and Dragomir Anguelov.
\newblock Just pick a sign: Optimizing deep multitask models with gradient sign
  dropout.
\newblock {\em arXiv preprint arXiv:2010.06808}, 2020.

\bibitem{deng2012mnist}
Li~Deng.
\newblock The mnist database of handwritten digit images for machine learning
  research.
\newblock {\em IEEE Signal Processing Magazine}, 29(6):141--142, 2012.

\bibitem{desideri2012multiple}
Jean-Antoine D{\'e}sid{\'e}ri.
\newblock Multiple-gradient descent algorithm (mgda) for multiobjective
  optimization.
\newblock {\em Comptes Rendus Mathematique}, 350(5-6):313--318, 2012.

\bibitem{diamond2016cvxpy}
Steven Diamond and Stephen Boyd.
\newblock {CVXPY}: {A} {P}ython-embedded modeling language for convex
  optimization.
\newblock {\em Journal of Machine Learning Research}, 17(83):1--5, 2016.

\bibitem{ghosh2017divide}
Dibya Ghosh, Avi Singh, Aravind Rajeswaran, Vikash Kumar, and Sergey Levine.
\newblock Divide-and-conquer reinforcement learning.
\newblock {\em arXiv preprint arXiv:1711.09874}, 2017.

\bibitem{guo2018dynamic}
Michelle Guo, Albert Haque, De-An Huang, Serena Yeung, and Li~Fei-Fei.
\newblock Dynamic task prioritization for multitask learning.
\newblock In {\em Proceedings of the European Conference on Computer Vision
  (ECCV)}, pages 270--287, 2018.

\bibitem{haarnoja2018soft}
Tuomas Haarnoja, Aurick Zhou, Pieter Abbeel, and Sergey Levine.
\newblock Soft actor-critic: Off-policy maximum entropy deep reinforcement
  learning with a stochastic actor.
\newblock In {\em International Conference on Machine Learning}, pages
  1861--1870. PMLR, 2018.

\bibitem{hashimoto2016joint}
Kazuma Hashimoto, Caiming Xiong, Yoshimasa Tsuruoka, and Richard Socher.
\newblock A joint many-task model: Growing a neural network for multiple nlp
  tasks.
\newblock {\em arXiv preprint arXiv:1611.01587}, 2016.

\bibitem{hinton2015distilling}
Geoffrey Hinton, Oriol Vinyals, and Jeff Dean.
\newblock Distilling the knowledge in a neural network.
\newblock {\em arXiv preprint arXiv:1503.02531}, 2015.

\bibitem{hochman1969pareto}
Harold~M Hochman and James~D Rodgers.
\newblock Pareto optimal redistribution.
\newblock {\em The American economic review}, 59(4):542--557, 1969.

\bibitem{javaloy2021rotograd}
Adri{\'a}n Javaloy and Isabel Valera.
\newblock Rotograd: Dynamic gradient homogenization for multi-task learning.
\newblock {\em arXiv preprint arXiv:2103.02631}, 2021.

\bibitem{kendall2018multi}
Alex Kendall, Yarin Gal, and Roberto Cipolla.
\newblock Multi-task learning using uncertainty to weigh losses for scene
  geometry and semantics.
\newblock In {\em Proceedings of the IEEE conference on computer vision and
  pattern recognition}, pages 7482--7491, 2018.

\bibitem{kingma2014adam}
Diederik~P Kingma and Jimmy Ba.
\newblock Adam: A method for stochastic optimization.
\newblock {\em arXiv preprint arXiv:1412.6980}, 2014.

\bibitem{krizhevsky2009learning}
Alex Krizhevsky, Geoffrey Hinton, et~al.
\newblock Learning multiple layers of features from tiny images.
\newblock 2009.

\bibitem{lecun1998gradient}
Yann LeCun, L{\'e}on Bottou, Yoshua Bengio, and Patrick Haffner.
\newblock Gradient-based learning applied to document recognition.
\newblock {\em Proceedings of the IEEE}, 86(11):2278--2324, 1998.

\bibitem{lin2019pareto}
Xi~Lin, Hui-Ling Zhen, Zhenhua Li, Qingfu Zhang, and Sam Kwong.
\newblock Pareto multi-task learning.
\newblock {\em arXiv preprint arXiv:1912.12854}, 2019.

\bibitem{liu2020towards}
Liyang Liu, Yi~Li, Zhanghui Kuang, Jing-Hao Xue, Yimin Chen, Wenming Yang,
  Qingmin Liao, and Wayne Zhang.
\newblock Towards impartial multi-task learning.
\newblock In {\em International Conference on Learning Representations}, 2020.

\bibitem{liu2019end}
Shikun Liu, Edward Johns, and Andrew~J Davison.
\newblock End-to-end multi-task learning with attention.
\newblock In {\em Proceedings of the IEEE/CVF Conference on Computer Vision and
  Pattern Recognition}, pages 1871--1880, 2019.

\bibitem{liu2020certified}
Xingchao Liu, Xing Han, Na~Zhang, and Qiang Liu.
\newblock Certified monotonic neural networks.
\newblock {\em arXiv preprint arXiv:2011.10219}, 2020.

\bibitem{mahapatra2020multi}
Debabrata Mahapatra and Vaibhav Rajan.
\newblock Multi-task learning with user preferences: Gradient descent with
  controlled ascent in pareto optimization.
\newblock In {\em International Conference on Machine Learning}, pages
  6597--6607. PMLR, 2020.

\bibitem{maninis2019attentive}
Kevis-Kokitsi Maninis, Ilija Radosavovic, and Iasonas Kokkinos.
\newblock Attentive single-tasking of multiple tasks.
\newblock In {\em Proceedings of the IEEE/CVF Conference on Computer Vision and
  Pattern Recognition}, pages 1851--1860, 2019.

\bibitem{misra2016cross}
Ishan Misra, Abhinav Shrivastava, Abhinav Gupta, and Martial Hebert.
\newblock Cross-stitch networks for multi-task learning.
\newblock In {\em Proceedings of the IEEE conference on computer vision and
  pattern recognition}, pages 3994--4003, 2016.

\bibitem{parisotto2015actor}
Emilio Parisotto, Jimmy~Lei Ba, and Ruslan Salakhutdinov.
\newblock Actor-mimic: Deep multitask and transfer reinforcement learning.
\newblock {\em arXiv preprint arXiv:1511.06342}, 2015.

\bibitem{rosenbaum2017routing}
Clemens Rosenbaum, Tim Klinger, and Matthew Riemer.
\newblock Routing networks: Adaptive selection of non-linear functions for
  multi-task learning.
\newblock {\em arXiv preprint arXiv:1711.01239}, 2017.

\bibitem{ruder2017overview}
Sebastian Ruder.
\newblock An overview of multi-task learning in deep neural networks.
\newblock {\em arXiv preprint arXiv:1706.05098}, 2017.

\bibitem{rusu2015policy}
Andrei~A Rusu, Sergio~Gomez Colmenarejo, Caglar Gulcehre, Guillaume Desjardins,
  James Kirkpatrick, Razvan Pascanu, Volodymyr Mnih, Koray Kavukcuoglu, and
  Raia Hadsell.
\newblock Policy distillation.
\newblock {\em arXiv preprint arXiv:1511.06295}, 2015.

\bibitem{sener2018multi}
Ozan Sener and Vladlen Koltun.
\newblock Multi-task learning as multi-objective optimization.
\newblock {\em arXiv preprint arXiv:1810.04650}, 2018.

\bibitem{shi2020auxiliary}
Baifeng Shi, Judy Hoffman, Kate Saenko, Trevor Darrell, and Huijuan Xu.
\newblock Auxiliary task reweighting for minimum-data learning.
\newblock {\em Advances in Neural Information Processing Systems}, 33, 2020.

\bibitem{Sodhani2021MTRL}
Shagun Sodhani and Amy Zhang.
\newblock Mtrl - multi task rl algorithms.
\newblock Github, 2021.

\bibitem{sodhani2021multi}
Shagun Sodhani, Amy Zhang, and Joelle Pineau.
\newblock Multi-task reinforcement learning with context-based representations.
\newblock {\em arXiv preprint arXiv:2102.06177}, 2021.

\bibitem{stein2020inadmissibility}
Charles Stein.
\newblock Inadmissibility of the usual estimator for the mean of a multivariate
  normal distribution.
\newblock In {\em Contribution to the Theory of Statistics}, pages 197--206.
  University of California Press, 2020.

\bibitem{swersky2013multi}
Kevin Swersky, Jasper Snoek, and Ryan~Prescott Adams.
\newblock Multi-task bayesian optimization.
\newblock 2013.

\bibitem{tarvainen2017mean}
Antti Tarvainen and Harri Valpola.
\newblock Mean teachers are better role models: Weight-averaged consistency
  targets improve semi-supervised deep learning results.
\newblock In {\em Proceedings of the 31st International Conference on Neural
  Information Processing Systems}, pages 1195--1204, 2017.

\bibitem{teh2017distral}
Yee~Whye Teh, Victor Bapst, Wojciech~Marian Czarnecki, John Quan, James
  Kirkpatrick, Raia Hadsell, Nicolas Heess, and Razvan Pascanu.
\newblock Distral: Robust multitask reinforcement learning.
\newblock {\em arXiv preprint arXiv:1707.04175}, 2017.

\bibitem{vandenhende2021multi}
Simon Vandenhende, Stamatios Georgoulis, Wouter Van~Gansbeke, Marc Proesmans,
  Dengxin Dai, and Luc Van~Gool.
\newblock Multi-task learning for dense prediction tasks: A survey.
\newblock {\em IEEE Transactions on Pattern Analysis and Machine Intelligence},
  2021.

\bibitem{xiao2017fashion}
Han Xiao, Kashif Rasul, and Roland Vollgraf.
\newblock Fashion-mnist: a novel image dataset for benchmarking machine
  learning algorithms.
\newblock {\em arXiv preprint arXiv:1708.07747}, 2017.

\bibitem{yang2020multi}
Ruihan Yang, Huazhe Xu, Yi~Wu, and Xiaolong Wang.
\newblock Multi-task reinforcement learning with soft modularization.
\newblock {\em arXiv preprint arXiv:2003.13661}, 2020.

\bibitem{yu2020gradient}
Tianhe Yu, Saurabh Kumar, Abhishek Gupta, Sergey Levine, Karol Hausman, and
  Chelsea Finn.
\newblock Gradient surgery for multi-task learning.
\newblock {\em arXiv preprint arXiv:2001.06782}, 2020.

\bibitem{yu2020meta}
Tianhe Yu, Deirdre Quillen, Zhanpeng He, Ryan Julian, Karol Hausman, Chelsea
  Finn, and Sergey Levine.
\newblock Meta-world: A benchmark and evaluation for multi-task and meta
  reinforcement learning.
\newblock In {\em Conference on Robot Learning}, pages 1094--1100. PMLR, 2020.

\bibitem{zamir2018taskonomy}
Amir~R Zamir, Alexander Sax, William Shen, Leonidas~J Guibas, Jitendra Malik,
  and Silvio Savarese.
\newblock Taskonomy: Disentangling task transfer learning.
\newblock In {\em Proceedings of the IEEE conference on computer vision and
  pattern recognition}, pages 3712--3722, 2018.

\bibitem{zhang2021survey}
Yu~Zhang and Qiang Yang.
\newblock A survey on multi-task learning.
\newblock {\em IEEE Transactions on Knowledge and Data Engineering}, 2021.

\end{thebibliography}
\bibliographystyle{plain}
\section*{Checklist}
\begin{enumerate}

\item For all authors...
\begin{enumerate}
  \item Do the main claims made in the abstract and introduction accurately reflect the paper's contributions and scope?
    \answerYes{See Sec.~\ref{sec:convergence} for the convergence analysis, Fig.~\ref{fig:toy} for the challenges faced by previous methods, and Sec.~\ref{sec:experiment} for empirical evaluation of these challenges and the advantage of CAGrad.}
  \item Did you describe the limitations of your work?
    \answerYes{See Sec.~\ref{sec:conclusion}. Currently we mainly focus on optimizing the average loss, which could be replaced by other main objectives.}
  \item Did you discuss any potential negative societal impacts of your work?
    \answerNA{Our method does not have potential negative societal impacts.}
  \item Have you read the ethics review guidelines and ensured that your paper conforms to them?
    \answerYes{}
\end{enumerate}

\item If you are including theoretical results...
\begin{enumerate}
  \item Did you state the full set of assumptions of all theoretical results?
    \answerYes{The assumptions are stated in Thm.~\ref{thm:cagrad}.}
	\item Did you include complete proofs of all theoretical results?
    \answerYes{The complete proof is included in Appendix A.3.}
\end{enumerate}

\item If you ran experiments...
\begin{enumerate}
  \item Did you include the code, data, and instructions needed to reproduce the main experimental results (either in the supplemental material or as a URL)?
    \answerYes{We mention most of the details to reproduce the result in Sec.~\ref{sec:experiment} and provide the rest of details of each experiment in Appendix.B. The code comes with the supplementary material.}
  \item Did you specify all the training details (e.g., data splits, hyperparameters, how they were chosen)?
    \answerYes{See Appendix.B and Sec.~\ref{sec:experiment}.}
	\item Did you report error bars (e.g., with respect to the random seed after running experiments multiple times)?
    \answerYes{For each experiment except for the toy (since there is no stochasticity), we run over multiple ($\geq 3$) seeds.}
	\item Did you include the total amount of compute and the type of resources used (e.g., type of GPUs, internal cluster, or cloud provider)?
    \answerYes{We explicitly compare the computational efficiency in Fig.~\ref{fig:nyu-comparison}. More details on the resources are provided in the corresponding sections in Appendix.B.}
\end{enumerate}

\item If you are using existing assets (e.g., code, data, models) or curating/releasing new assets...
\begin{enumerate}
  \item If your work uses existing assets, did you cite the creators?
    \answerYes{For most of the experiment, we follow the exact experiment setup and use the corresponding open-source code from previous works and have cited and compared against them.}
  \item Did you mention the license of the assets?
    \answerYes{All code and data are publicly available under MIT license}
  \item Did you include any new assets either in the supplemental material or as a URL?
    \answerNo{No new assets are introduced for our experiment. The only thing we modified is a shrinked LeNet, where the details are provided in Appendix.B.}
  \item Did you discuss whether and how consent was obtained from people whose data you're using/curating?
    \answerNA{}
  \item Did you discuss whether the data you are using/curating contains personally identifiable information or offensive content?
    \answerNA{The data we use are publicly available data that has been used by a lot of prior research. There should be no personally identifiable information or offensive content.}
\end{enumerate}

\item If you used crowdsourcing or conducted research with human subjects...
\begin{enumerate}
  \item Did you include the full text of instructions given to participants and screenshots, if applicable?
    \answerNA{No human subjects involved.}
  \item Did you describe any potential participant risks, with links to Institutional Review Board (IRB) approvals, if applicable?
     \answerNA{}
  \item Did you include the estimated hourly wage paid to participants and the total amount spent on participant compensation?
     \answerNA{}
\end{enumerate}
\end{enumerate}
\appendix
\section{Algorithm Details}
In this section, we first formally introduce the Multiple Gradient Descent Algorithm and the Projecting Conflicting Gradients method. Then we provide the full proof of Thm.~\ref{thm:cagrad}.

\subsection{Multiple Gradient Descent Algorithm (MGDA)}
\label{apx:mgda}
The Multiple Gradient Descent Algorithm (MGDA) explicitly optimizes towards a Pareto-optimal point for multiple objectives (See the definition~\ref{def:pareto-optimality}). 
It is known that a necessary condition for $\theta$ to be a Pareto-optimal point is that we could find a convex combination of the task gradients at $\theta$ that results in the $0$ vector. Therefore, MGDA proposes to minimize the minimum possible convex combination of task gradients:
\begin{equation}
    \min \frac{1}{2}\norm{\sum_{i=1}^K w_i g_i}^2,~~\text{s.t.}~~\sum_{i=1}^K w_i = 1,~\text{and}~~\forall i, w_i \geq 0.
    \label{eq:mgda-dual}
\end{equation}
We call this the \emph{dual} objective for MGDA, as the primal objective of MGDA has a close connection to CAGrad's primal objective in Eq.~\eqref{eq:CAGrad}. Specifically, the \emph{primal} objective of MGDA is
\begin{equation}
    \max_{\norm{d}\leq 1} \min_i \langle d, g_i\rangle.
    \label{eq:mgda-primal}
\end{equation}
To see the primal-dual relationship, denote $g_w = \sum_i w_i g_i$, where $w \in \mathcal W \triangleq \{w \in \RR^K \colon ~~ \sum_i w_i = 1, ~ ~ w_i \geq 0, \forall i\in[K]\}.$ Note that $\min_i \langle g_i, d\rangle = \min_{w \in \mathcal W} \langle \sum_i w_i g_i, d\rangle$. The Lagrangian of Eq.~\eqref{eq:mgda-primal} is
\begin{equation}
\max_{d}\min_{\lambda \geq 0,w\in \mathcal W} \langle d, g_w\rangle - \frac{\lambda}{2} (\norm{d}^2 - 1).
\end{equation}
Since the problem is a convex programming and the Slater's condition holds when $c >0$ (On the other hand, if $c =0$, then it is easy to check that all the results hold trivially), the strong duality holds and we can exchange the min and max:
\begin{equation}
    \min_{\lambda \geq 0, w\in \mathcal W} \max_d \langle d, g_w\rangle - \frac{\lambda}{2} (\norm{d}^2 - 1).
\end{equation}
The optimal $d^* = g_w/\lambda$ and the resulting primal objective is therefore
\begin{equation}
    \min_{\lambda \geq 0, w\in \mathcal W} \lambda (\frac{1}{2}\norm{g_w}^2 + 1). 
\end{equation}
Here, $\lambda$ corresponds to the constraint $\norm{d} \leq 1$. If we fix $\lambda$ to be any constant, then we recover the dual objective in Eq.~\eqref{eq:mgda-dual}.
\paragraph{Remark} Looking at the primal form of MGDA in Eq.~\eqref{eq:mgda-primal}, the major difference between MGDA and CAGrad is that the new update vector $d$ is searched around the $0$ vector for MGDA and $g_0$ for CAGrad. Therefore, theoretically both MGDA and CAGrad optimizes the worst local update, but MGDA is more conservative and can converge to any point on the Pareto set without explicit control (See Thm. 2 from ~\citep{desideri2012multiple}). This also explains MGDA's behavior in practice that it often learns much slower than other methods.

\subsection{Projecting Conflicting Gradients (PCGrad)}
\label{apx:pcgrad}
Identifying that a major challenge for multi-task optimization is the conflicting gradient, Yu et al.~\citep{yu2020gradient} propose to project each task gradient to the normal plane of others before combining them together to form the final update vector. In the following, we provide the full algorithm of the Projecting Conflicting Gradients (PCGrad):

\begin{algorithm*}[h]
    \caption{Projecting Conflicting Gradient Update Rule}
    \begin{algorithmic} 
        \STATE \textbf{Input}: model parameter vector $\theta$ and differentiable loss functions $\{L_i\}_{i=1}^K$.
        \STATE $g_i \leftarrow \nabla_\theta L_i(\theta)$.
        \STATE $g^{\text{PC}}_i = g_i,~\forall i$.
        \FOR{task $i \in [K]$}
            \FOR{$j\neq i \in [K]$ in random order}
                \IF{$g_i^{\text{PC}}\cdot g_j < 0$}
                    \STATE $g_i^{\text{PC}} = g_i^{\text{PC}} - \frac{g_i^{\text{PC}}\cdot g_j}{\norm{g_j}^2}g_j$.
                \ENDIF 
            \ENDFOR 
        \ENDFOR
        \STATE\textbf{Return} the new update vector $d = g^\text{PC} = \frac{1}{K}\sum_i g_i^\text{PC}$.
    \end{algorithmic}
    \label{alg:pcgrad}
\end{algorithm*}

Fig.~\ref{fig:compare} provides a visualization of PCGrad's update rule for two-task learning (the 3rd column). Different from MGDA and CAGrad, PCGrad does not have a clear optimization objective at each step, which makes it hard to analyze PCGrad's convergence guarantee in general. In practice, the random ordering to do the projection is particularly important for PCGrad to work well~\citep{yu2020gradient}, which suggests that the intuition of removing the ``conflicting" part of each gradient might not be always correct. For the convergence analysis, Yu et al. establishes the convergence guarantee for PCGrad only under the two-task learning setting. Moreover, PCGrad is only guaranteed to converge to the Pareto set without explict control over which point it will arrive at (See Thm.~\ref{thm:pcgrad} in the following). 

\begin{thm}[Convergence of PCGrad~\citep{yu2020gradient}]
Consider two-task learning, assume the loss functions $L_1$ and $L_2$ are convex and differentiable. Suppose the gradient of $L_0 = (L_1+L_2)/2$ is $H$-Lipschitz with $H >0$. Then, the PCGrad update rule with step size $t\leq 1/H$ will converge to a Pareto-stationary point.
\label{thm:pcgrad}
\end{thm}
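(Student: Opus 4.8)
The plan is to prove convergence through a descent-lemma argument tailored to PCGrad's two-task update, combined with monotone convergence of $L_0$ and a fixed-point characterization. Write $g_1 = \nabla L_1(\theta)$, $g_2 = \nabla L_2(\theta)$, $g_0 = \nabla L_0(\theta) = \tfrac{1}{2}(g_1+g_2)$, and let $\phi$ denote the angle between $g_1$ and $g_2$, so $\cos\phi = \langle g_1, g_2\rangle/(\norm{g_1}\norm{g_2})$. First I would split into the two regimes dictated by Alg.~\ref{alg:pcgrad}: the \emph{non-conflicting} regime $\langle g_1, g_2\rangle \ge 0$, where no projection occurs and the update is simply $d = g_0$; and the \emph{conflicting} regime $\langle g_1, g_2\rangle < 0$, where $d = \tfrac{1}{2}(g_1^{\text{PC}} + g_2^{\text{PC}})$ with $g_1^{\text{PC}} = g_1 - \tfrac{\langle g_1,g_2\rangle}{\norm{g_2}^2}g_2$ and symmetrically for $g_2^{\text{PC}}$.

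The core computation is the one-step decrease of $L_0$ in the conflicting regime. The crucial simplification is the orthogonality built into the projections, $\langle g_1^{\text{PC}}, g_2\rangle = 0$ and $\langle g_2^{\text{PC}}, g_1\rangle = 0$, from which I would derive $\langle g_0, d\rangle = \tfrac{1}{4}(\norm{g_1}^2 + \norm{g_2}^2)\sin^2\phi$ and $\norm{d}^2 = \tfrac{1}{4}\sin^2\phi\,\norm{g_1 - g_2}^2$. Substituting $\theta_{t+1} = \theta_t - t\,d$ into the descent lemma for the $H$-Lipschitz gradient $\nabla L_0$ and using $t \le 1/H$ gives
$$
L_0(\theta_{t+1}) \le L_0(\theta_t) - t\langle g_0, d\rangle + \tfrac{t}{2}\norm{d}^2 = L_0(\theta_t) - \tfrac{t}{2}\sin^2\phi\,\norm{g_0}^2,
$$
where the last equality follows after simplifying with $\norm{g_1+g_2}^2 = 4\norm{g_0}^2$. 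In the non-conflicting regime the same lemma yields the cleaner bound $L_0(\theta_{t+1}) \le L_0(\theta_t) - \tfrac{t}{2}\norm{g_0}^2$. In both cases $L_0$ is non-increasing and the per-step decrease is a nonnegative multiple of $\norm{g_0}^2$.

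Next I would assemble the convergence statement. Since $L_1, L_2$ are convex, $L_0$ is convex and (assuming a minimizer exists) bounded below, so the monotone sequence $\{L_0(\theta_t)\}$ converges and the summed decrements are finite; hence the per-step decrease tends to $0$. By the two bounds above, this forces, along the trajectory, either $\sin^2\phi \to 0$ with $\cos\phi < 0$, i.e. $\cos\phi \to -1$ (gradients become anti-parallel), or $\norm{g_0}\to 0$. Both limits are Pareto-stationary for $K=2$: anti-parallel gradients admit a convex combination $w_1 g_1 + w_2 g_2 = 0$, so $\min_{w\in\mathcal W}\norm{g_w} = 0$; and $g_0 = \tfrac{1}{2}(g_1+g_2) = 0$ gives $g_1 = -g_2$, again yielding a vanishing convex combination (here convexity additionally identifies this limit with the global optimum $L_0(\theta^*)$). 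A complementary fixed-point check confirms the same conclusion: in the conflicting regime $d = 0$ forces $\norm{d}^2 = \tfrac{1}{4}\sin^2\phi\,\norm{g_1-g_2}^2 = 0$, and since $\norm{g_1-g_2}>0$ there (equal gradients would give $\cos\phi=1$), this means $\cos\phi = -1$; in the non-conflicting regime $d = g_0 = 0$ forces $g_1 = g_2 = 0$.

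The hard part will be the bookkeeping in the conflicting-case identities and, more importantly, turning ``the per-step decrease vanishes'' into a clean claim about the limit point. The decrease can vanish either because the conflict angle saturates at $\cos\phi = -1$ or because $\norm{g_0}$ vanishes, and these must be handled jointly; the degenerate subcases (one gradient zero, $g_1 = g_2$, and the boundary $\cos\phi = 0$ separating the two regimes) require separate but routine checks. Convexity of $L_0$ is the ingredient that guarantees boundedness below and ensures that a vanishing $\norm{g_0}$ genuinely corresponds to approaching the optimum rather than a mere subsequential artifact, which is precisely what lets us phrase the conclusion as convergence to a single Pareto-stationary outcome.
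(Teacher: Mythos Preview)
The paper does not prove this theorem. Theorem~\ref{thm:pcgrad} appears in Appendix~\ref{apx:pcgrad} solely as a quoted result from \cite{yu2020gradient}; no argument is supplied here, and the statement is included only to contrast PCGrad's guarantee with that of CAGrad (Theorem~\ref{thm:cagrad}). There is therefore no ``paper's own proof'' against which to compare your proposal.

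For what it is worth, your approach is correct and self-contained. The two identities in the conflicting regime, $\langle g_0,d\rangle = \tfrac14(\norm{g_1}^2+\norm{g_2}^2)\sin^2\phi$ and $\norm{d}^2 = \tfrac14\sin^2\phi\,\norm{g_1-g_2}^2$, check out, and combining them via the descent lemma with $t\le 1/H$ indeed collapses to $L_0(\theta_{t+1})\le L_0(\theta_t)-\tfrac{t}{2}\sin^2\phi\,\norm{g_0}^2$. Your own caveat is the right place to be cautious: summability of the decrements gives only that $\sin^2\phi_t\,\norm{g_0(\theta_t)}^2\to 0$ along the trajectory (and $\norm{g_0(\theta_t)}^2\to 0$ on non-conflicting steps), not convergence of $\theta_t$ itself to a single Pareto-stationary point. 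Whether this is a genuine gap depends on how literally one reads ``will converge to a Pareto-stationary point'' in the original statement.
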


\subsection{Conflit-Averse Gradient descent (CAGrad)}
\label{apx:cagrad}
We provide the full derivation of CAGrad and the proof for its convergence in this section. Our proof assumes $L_0$ is a general function with gradient $g_0 = \nabla L_0$, that is, it does not have to be the average of $L_i$ as the case we focus on in the main paper.
\begin{lem}
Let $d^*$ be the solution of 
$$\max_{d\in \RR^m} \min_{i\in [K]} g_i\tt d ~~~s.t.~~~ \norm{g_0 - d} \leq c \norm{g_0}, 
$$
where $c \geq 0$, and $g_0, g_1,\ldots, g_K\in \RR^m$.
Then we have 
$$d^* = g_0 + \frac{c\norm{g_0}}{\norm{g_{w^*}}} g_{w^*},$$
where $g_{w^*} = \sum_i w^*_i g_i$ and $w^*$ is the solution of 
\bbb \label{equ:optW}
\min_{w\geq \mathcal W} g_w\tt g_0 + c\norm{g_0} \norm{g_w},
\eee 
where $\mathcal W = \{w \in \RR^K \colon ~~ \sum_i w_i = 1, ~ ~ w_i \geq 0, \forall i\in[K]\}.$
In addition, 
\bbb \label{equ:strongduality}
\min_i g_i \tt d^* =  g_{w^*}\tt g_0 + c\norm{g_0} \norm{g_{w^*}}. 
\eee  
\end{lem}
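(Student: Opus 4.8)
The plan is to turn the primal problem into a concave--convex saddle-point problem over the probability simplex, swap the two optimizations, and solve the resulting inner maximization in closed form. First I would rewrite the inner objective using the simplex: for any fixed $d$, since a linear function on $\mathcal W$ attains its minimum at a vertex, $\min_{i\in[K]}\langle g_i,d\rangle=\min_{w\in\mathcal W}\langle g_w,d\rangle$. Hence the problem becomes
$$
\max_{\norm{d-g_0}\le c\norm{g_0}}\ \min_{w\in\mathcal W}\ \langle g_w,d\rangle .
$$
The map $(d,w)\mapsto\langle g_w,d\rangle$ is bilinear, hence concave (indeed linear) in $d$ and convex (linear) in $w$; the feasible set for $d$ is a closed ball and $\mathcal W$ is a compact polytope, both convex and compact. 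By Sion's minimax theorem (equivalently, by the Lagrangian/Slater argument sketched in Section~\ref{sec:CAGrad}, valid whenever $c>0$; the case $c=0$ is trivial) we may exchange the two operators and a saddle point $(d^*,w^*)$ exists:
$$
\max_{\norm{d-g_0}\le c\norm{g_0}}\ \min_{w\in\mathcal W}\ \langle g_w,d\rangle=\min_{w\in\mathcal W}\ \max_{\norm{d-g_0}\le c\norm{g_0}}\ \langle g_w,d\rangle .
$$

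Next I would solve the inner maximization for a fixed $w$. Writing $d=g_0+r$ with $\norm{r}\le c\norm{g_0}$, we have $\langle g_w,d\rangle=\langle g_w,g_0\rangle+\langle g_w,r\rangle$, and by Cauchy--Schwarz the maximum over $r$ is attained (when $g_w\neq0$) at $r=c\norm{g_0}\,g_w/\norm{g_w}$, with value $\langle g_w,g_0\rangle+c\norm{g_0}\norm{g_w}$. Substituting back, the exchanged problem is exactly \eqref{equ:optW}; let $w^*$ be its minimizer and $v$ the common optimal value. The associated maximizer is $d^*=g_0+\frac{c\norm{g_0}}{\norm{g_{w^*}}}g_{w^*}$, and I would close the argument via the saddle-point relations: for any optimal $\bar d$ of the original problem, $v=\min_{w\in\mathcal W}\langle g_w,\bar d\rangle\le\langle g_{w^*},\bar d\rangle\le\max_{\norm{d-g_0}\le c\norm{g_0}}\langle g_{w^*},d\rangle=v$, so $\bar d$ is the unique maximizer of $\langle g_{w^*},\cdot\rangle$ over the ball, namely $d^*$. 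Finally $\min_{i}\langle g_i,d^*\rangle=\min_{w\in\mathcal W}\langle g_w,d^*\rangle=\langle g_{w^*},d^*\rangle=\langle g_{w^*},g_0\rangle+c\norm{g_0}\norm{g_{w^*}}$, which is \eqref{equ:strongduality}.

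The main obstacle is the justification of the $\min$--$\max$ exchange together with attainment of the optimizers; this is precisely where compactness of the ball and of $\mathcal W$ and the bilinearity enter (via Sion, or via strong Lagrangian duality under Slater's condition). A secondary subtlety is the degenerate case $g_{w^*}=0$, which occurs exactly when $\theta$ is Pareto-stationary: then the closed form for $d^*$ has a $0/0$ and the inner maximizer is not unique. The statement is thus to be read under the implicit nondegeneracy assumption $g_{w^*}\neq0$ --- the same one under which $\lambda^*=\norm{g_{w^*}}/(c\norm{g_0})$ and $d^*=g_0+g_{w^*}/\lambda^*$ are used in Alg.~\ref{alg:caged} --- while in the degenerate case the optimal value is $0$ and the maximizer is merely non-unique, which does not affect any subsequent use of the lemma.
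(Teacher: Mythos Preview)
Your proof is correct and slightly more direct than the paper's. The paper forms the Lagrangian with a multiplier $\lambda\ge 0$ for the ball constraint, invokes Slater's condition to swap $\max$ and $\min$, solves the inner problem in $d$ to get $d=g_0+g_w/\lambda$, and then optimizes out $\lambda$ in a second step to reach \eqref{equ:optW} with $\lambda^*=\norm{g_{w^*}}/(c\norm{g_0})$. You instead invoke Sion's theorem on the compact sets (ball $\times$ simplex) and solve the linear-over-ball inner maximization in one shot via Cauchy--Schwarz, bypassing $\lambda$ entirely. Your route is shorter and, through the saddle-point chain, makes the uniqueness of $d^*$ (when $g_{w^*}\neq 0$) explicit---something the paper does not spell out. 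The paper's route has the minor advantage of producing the dual multiplier $\lambda^*$ that appears verbatim in Alg.~\ref{alg:caged}. Your treatment of the degenerate case $g_{w^*}=0$ is also more careful than the paper's, which silently assumes nondegeneracy.
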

\begin{proof}
  Denote $\phi = c^2 \norm{g_0}^2$. 
 Note that $\min_i \langle g_i, d\rangle = \min_{w \in \mathcal W} \langle \sum_i w_i g_i, d\rangle$. 
 %
The Lagrangian of the objective in Eq.~\eqref{eq:CAGrad} is
$$
\max_{d\in \RR^m} \min_{\lambda \geq 0, w\in \mathcal W}  g_w ^\top d - \frac{\lambda}{2} (\norm{g_0 - d}^2 -  \phi).
$$
Since the problem is a convex programming and the Slater's condition holds when $c >0$ (On the other hand, if $c =0$, then it is easy to check that all the results hold trivially), the strong duality holds and we can exchange the min and max: 
$$
 \min_{\lambda \geq 0, w \in \mathcal W}  \max_{d\in \RR^m} g_w ^\top d - \frac{\lambda}{2} \norm{g_0 - d}^2 + \frac{\lambda  \phi}{2}.
$$
With  $\lambda,w$ fixing, 
the optimal $d$ is achieved when $d = g_0 + g_w/\lambda$,
yielding the following dual problem 
$$
 \min_{w, \lambda \geq 0}  g_w ^\top(g_0 + g_w/\lambda) - \frac{\lambda}{2} \norm{g_w/\lambda}^2 + \frac{\lambda }{2} \phi .
$$
This is equivalent to 
$$
 \min_{w, \lambda \geq 0}  g_w ^\top g_0    +  \frac{1}{2\lambda} \norm{g_w}^2+ \frac{\lambda  \phi }{2}.
$$
Optimizing out the $\lambda$ we have
$$
 \min_{w\in \mathcal W}  g_w ^\top g_0    +  \sqrt{\phi} \norm{g_w},
$$
where the optimal $\lambda = \norm{g_w}/\phi^{1/2}$.
This solves the problem.  
\eqref{equ:strongduality} is the consequence of the strong duality. 
\end{proof}
\paragraph{Convergence Analysis} 

\begin{ass} \label{ass:conditions}
Assume individual loss functions $L_0, L_1,\ldots, L_K$ are differentiable on $\RR^m$ and their gradients $\nabla L_i(\theta)$ are all $H$-Lipschitz, i.e. $\norm{\nabla L_i(x) - \nabla L_i(y)} \leq H\norm{x-y}$ for $i=0,1, \ldots, K$, where $H\in(0,\infty)$. 
Assume $L_0^* = \inf_{\theta\in \RR^m} L_0(\theta) > -\infty$. 
\end{ass} 
\begin{thm}[Convergence of CAGrad]
\label{thm:cagradappendix}
Assume Assumption \ref{ass:conditions} holds. 
With a fixed step size $\alpha$ satisfying $0 < \alpha \leq 1/H$, 
we have for the CAGrad in Alg.~\ref{alg:caged}: 

1) If $0 \leq c < 1$, then CAGrad converges to stationary points of $L_0$ 
convergence rate in that 
$$
\sum_{t=0}^T \norm{g_0(\theta_t)}^2
\leq \frac{2(L_0(\theta_0) - L_0^*)}{{\alpha} (1-c^2) }.
$$

2) For any $c \geq 0$, all the fixed point of CAGrad are  Pareto-stationary points of $(L_0,L_1, \ldots, L_K)$. 

\end{thm}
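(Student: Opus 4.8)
The plan is to combine the closed-form description of the CAGrad update from the preceding Lemma with the textbook descent inequality for functions with Lipschitz gradient. Recall that at iterate $\theta_t$ the update direction is $d_t = g_0 + \tfrac{c\norm{g_0}}{\norm{g_{w^*}}}g_{w^*}$, where $g_0 = \nabla L_0(\theta_t)$ and $g_{w^*} = \sum_i w_i^* g_i$, and that the strong-duality conclusion of the Lemma gives $\min_{i\in[K]} g_i^\top d_t = g_{w^*}^\top g_0 + c\norm{g_0}\norm{g_{w^*}}$. The computational heart of the proof is a single exact identity: writing $d_t = g_0 + v$ with $\norm{v} = c\norm{g_0}$, the expansion $\norm{d_t}^2 = \norm{g_0}^2 + 2\langle g_0, v\rangle + c^2\norm{g_0}^2$ cancels the $\langle g_0, v\rangle$ cross terms, so that
$$ \langle g_0, d_t\rangle - \tfrac12\norm{d_t}^2 \;=\; \tfrac12(1-c^2)\,\norm{g_0}^2 . $$
The specific trust-region radius $c\norm{g_0}$ is exactly what makes this cancellation happen; I expect spotting and verifying this identity to be the main (short) obstacle, as everything else follows routinely from it.

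For part 1 ($0\le c<1$) I would apply $H$-smoothness of $L_0$ to the update $\theta_{t+1}=\theta_t-\alpha d_t$:
$$ L_0(\theta_{t+1}) \;\le\; L_0(\theta_t) - \alpha\langle g_0, d_t\rangle + \tfrac{H\alpha^2}{2}\norm{d_t}^2 \;\le\; L_0(\theta_t) - \alpha\Big(\langle g_0, d_t\rangle - \tfrac12\norm{d_t}^2\Big) \;=\; L_0(\theta_t) - \tfrac{\alpha(1-c^2)}{2}\norm{g_0(\theta_t)}^2 , $$
where the middle step uses $H\alpha\le 1$ and the last uses the identity above. Summing this telescoping bound from $t=0$ to $T$ and using $L_0(\theta_{T+1})\ge L_0^*$ gives $\sum_{t=0}^T\norm{g_0(\theta_t)}^2 \le \tfrac{2(L_0(\theta_0)-L_0^*)}{\alpha(1-c^2)}$, the claimed estimate; letting $T\to\infty$ it forces $\norm{\nabla L_0(\theta_t)}\to 0$, so the iterates approach a stationary point of $L_0$.

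For part 2 I would analyze a fixed point $\theta$, characterized by $d^*=0$. If $g_{w^*}:=\sum_{i=1}^K w_i^* g_i = 0$, or $g_0=\nabla L_0(\theta)=0$, then the zero vector is a convex combination of $(g_0,g_1,\dots,g_K)$ and $\theta$ is Pareto-stationary. Otherwise the closed form $d^*=g_0+\tfrac{c\norm{g_0}}{\norm{g_{w^*}}}g_{w^*}=0$ with $g_0\neq 0$ forces, on taking norms, $\norm{g_0}=c\norm{g_0}$, hence $c=1$ and $g_0=-t\,g_{w^*}$ with $t=\norm{g_0}/\norm{g_{w^*}}>0$; then $\tfrac{1}{1+t}g_0+\tfrac{t}{1+t}g_{w^*}=0$ is again a convex combination of $(g_0,g_1,\dots,g_K)$, so $\theta$ is Pareto-stationary. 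This handles every $c\ge 0$. The practical-speedup variant \eqref{eq:CAGrad-fast} needs no separate argument, since it only alters the objective being maximized, not the feasible ball, so the same closed form, identity, and fixed-point analysis apply verbatim.

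Apart from the cancellation identity, the remaining work is routine: part 1 is a standard smoothness-plus-telescoping estimate, and part 2 is Cauchy--Schwarz/norm bookkeeping plus a convex-combination trick, with only minor attention to the degenerate cases $g_0=0$, $g_{w^*}=0$, or $c=1$ exactly.
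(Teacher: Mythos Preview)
Your proposal is correct and follows essentially the same route as the paper: the descent argument for part 1 is identical (the paper reaches the same identity via the polarization formula $2\langle g_0,d^*\rangle=\norm{g_0}^2+\norm{d^*}^2-\norm{g_0-d^*}^2$ together with the feasibility constraint $\norm{d^*-g_0}\le c\norm{g_0}$, whereas you expand $d_t=g_0+v$ directly with $\norm{v}=c\norm{g_0}$), and part 2 is the same observation that $d^*=g_0+\lambda g_{w^*}=0$ with $\lambda\ge 0$ exhibits the required convex combination, your version simply being more careful about the degenerate cases. The only slight advantage of the paper's formulation is that it uses the constraint as an inequality rather than the closed-form equality, which is what makes the remark about the speedup variant \eqref{eq:CAGrad-fast} immediate; you note this yourself at the end.
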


\begin{proof}
We will first prove 1). Consider the $t$-th optimization step and denote $d^*(\theta_t)$ the update direction obtained by solving 
\eqref{eq:CAGrad} at the $t$-th iteration. 
Then we have 
\bb 
L_{0}(\theta_{t+1}) - L_{0}(\theta_t) 
& = L_{0}(\theta_t - \alpha d^*(\theta_t)) - L_{0}(\theta_t)  \\
& \leq - \alpha g_0(\theta_t)^\top  d^*(\theta_t) + 
\frac{H \alpha^2}{2} \norm{d^*(\theta_t)}^2 \\
 & \leq  - \alpha g_0(\theta_t)^\top d^*(\theta_t) + 
\frac{\alpha}{2} \norm{d^*(\theta_t)}^2 \ant{$\alpha \leq 1/H$}\\ 
 & \leq  - \frac{\alpha}{2}
 \left(\norm{g_0(\theta_t}^2 + \norm{d^*(\theta_t)}^2 - \norm{g_0(\theta_t) - d^*(\theta_t)}^2 \right)  + 
\frac{\alpha}{2} \norm{d^*(\theta_t)}^2 \\ 
 & = - \frac{\alpha}{2} \left (
   \norm{g_0(\theta_t)}^2 - 
 \norm{d^*(\theta_t) - g_0(\theta_t)}^2\right ) \\
 & \leq -\frac{\alpha}{2}(1-c^2)\norm{g_0(\theta_t)}^2 ~~~\ant{by the constraint in \eqref{eq:CAGrad}}
\ee 
Using telescoping sums, we have $L_{0}(\theta_{T+1}) - L_{0}(0) = - (\alpha/2)(1-c^2) \sum_{t=0}^T \norm{g_0(\theta_t)}^2$. Therefore 
$$
\min_{t\leq T}\norm{g_0(\theta_t)}^2
\leq\frac{1}{T+1} \sum_{t=0}^{T}\norm{g_0(\theta_t)}^2
\leq  \frac{2(L_{0}(0) - L_{0}(\theta_{T+1}))}{{\alpha} (1-c^2) (T+1)}.
$$
Therefore, if $L_0$ is lower bounded, that is, $L_0^* := \inf_{\theta\in \RR^m} L_0(\theta) > -\infty$, then $\min_{t\leq T}\norm{g_0(\theta_t)}^2
  = O(1/T)$.

For general $c\geq 0$, in the fixed point, we have $d^*(\theta) =g_0(\theta) + \lambda g_{w^*}(\theta)=0$, which readily match the definition of Pareto Stationarity. 
\end{proof}

In the following, we show an additional result that when $c\geq 1$, and we use a properly decaying step size, 
the limit points of CAGrad are 
either stationary points of $L_0$, or Pareto-stationary points of $(L_1,\ldots, L_K)$. 
\begin{thm}
Under Assumption~\ref{ass:conditions}, assume $c\geq 1$ and we a time varying step size satisfying 
$$\alpha_t  \leq \frac{\norm{g_{w^*_t}(\theta_t)}}{H(c-1) \norm{g_0(\theta_t)}},$$ 
where $w^*_t$ is the solution of \eqref{equ:optW} at the $t$-th iteration, 
then we have 
\bb 
\sum_{t=0}^T \alpha_t \norm{g_0(\theta_t)}\norm{g_{w_t^*}(\theta_t)} \leq 2\frac{\min_{i}(L_i(\theta_0)- L_i(\theta_{T+1}))}{(c-1) }. 
\ee 
\end{thm}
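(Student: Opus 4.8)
The plan is to run the descent lemma on every individual loss $L_i$ in parallel, feed in the closed form of the CAGrad direction supplied by the Lemma above, and then telescope over $t$. Throughout, fix an iteration $t$ and write $g_0=g_0(\theta_t)$, $g_{w^*}=g_{w^*_t}(\theta_t)$, and $d^*=g_0+\frac{c\norm{g_0}}{\norm{g_{w^*}}}g_{w^*}$ for the step actually taken, so that $\theta_{t+1}=\theta_t-\alpha_t d^*$. Since each $\nabla L_i$ is $H$-Lipschitz, the descent lemma gives
\[
L_i(\theta_{t+1})-L_i(\theta_t)\;\le\;-\alpha_t\langle g_i,d^*\rangle+\tfrac{H\alpha_t^2}{2}\norm{d^*}^2 ,\qquad i=1,\dots,K .
\]

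I would control the two terms on the right with two facts about $d^*$. First, by the strong-duality identity \eqref{equ:strongduality}, $\min_j\langle g_j,d^*\rangle=g_{w^*}^\top g_0+c\norm{g_0}\norm{g_{w^*}}$; since $g_{w^*}^\top g_0\ge-\norm{g_0}\norm{g_{w^*}}$ by Cauchy--Schwarz and $c\ge1$, this gives $\langle g_i,d^*\rangle\ge\min_j\langle g_j,d^*\rangle\ge(c-1)\norm{g_0}\norm{g_{w^*}}\ge0$ for every $i$. Second, plugging the closed form of $d^*$ into $\norm{d^*}^2=\langle d^*,d^*\rangle$ and simplifying yields the identity $\norm{d^*}^2=(1-c^2)\norm{g_0}^2+\frac{2c\norm{g_0}}{\norm{g_{w^*}}}\langle g_{w^*},d^*\rangle$; because $c\ge1$ the first summand is nonpositive, and because $\langle g_{w^*},d^*\rangle=\min_j\langle g_j,d^*\rangle$ (again \eqref{equ:strongduality}) lies in $[0,\langle g_i,d^*\rangle]$, we obtain $\norm{d^*}^2\le\frac{2c\norm{g_0}}{\norm{g_{w^*}}}\langle g_i,d^*\rangle$ for each $i$.

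Substituting this last bound into the descent inequality collapses both terms onto $\langle g_i,d^*\rangle$,
\[
L_i(\theta_{t+1})-L_i(\theta_t)\;\le\;-\alpha_t\langle g_i,d^*\rangle\Big(1-\tfrac{H\alpha_t c\norm{g_0}}{\norm{g_{w^*}}}\Big),
\]
and the step-size hypothesis is used here to force the bracketed factor to be positive and bounded below (for instance, by $\tfrac12$). Combined with $\langle g_i,d^*\rangle\ge(c-1)\norm{g_0}\norm{g_{w^*}}$ this produces a per-step decrease $L_i(\theta_t)-L_i(\theta_{t+1})\ge\tfrac{c-1}{2}\,\alpha_t\norm{g_0(\theta_t)}\norm{g_{w^*_t}(\theta_t)}$, valid for every task $i$. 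Summing over $t=0,\dots,T$ telescopes the left side to $L_i(\theta_0)-L_i(\theta_{T+1})$; taking the minimum over $i$ on the right gives the claimed inequality. (Iterations with $\norm{g_0(\theta_t)}=0$ or $\norm{g_{w^*_t}(\theta_t)}=0$ contribute nothing to the sum and mark $\theta_t$ as a stationary point of $L_0$ or a Pareto-stationary point, so they may be excluded; this is also what yields the limit-point statement preceding the theorem.)

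The main obstacle is the bound on $\norm{d^*}^2$. A crude estimate such as $\norm{d^*}\le(1+c)\norm{g_0}$ is too lossy to be absorbed by a step size that scales like $\norm{g_{w^*}}/(H\norm{g_0})$; one really needs the exact quadratic expansion of $\norm{d^*}^2$, the sign of $1-c^2$, and the nonnegativity of $\min_j\langle g_j,d^*\rangle$ --- precisely the places where the assumption $c\ge1$ enters --- in order to rewrite $\norm{d^*}^2$ as proportional to $\langle g_i,d^*\rangle$ rather than to $\norm{g_0}^2$. Once that is in place, lining up the leading constant with the factor $2/(c-1)$ in the statement is routine bookkeeping with the step-size bound.
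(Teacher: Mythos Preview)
Your overall strategy---descent lemma on each $L_i$, the exact expansion of $\norm{d^*}^2$, the duality identity \eqref{equ:strongduality}, Cauchy--Schwarz, and telescoping---is precisely that of the paper. The gap is the step where you claim the step-size hypothesis forces the bracket $1-H\alpha_t c\norm{g_0}/\norm{g_{w^*}}$ to be at least $\tfrac12$. Under the assumed bound $\alpha_t\le \norm{g_{w^*}}/\bigl(H(c-1)\norm{g_0}\bigr)$ one only gets $H\alpha_t c\norm{g_0}/\norm{g_{w^*}}\le c/(c-1)$, so the bracket is bounded below merely by $1-c/(c-1)=-1/(c-1)<0$: it need not be positive, let alone $\ge\tfrac12$. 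Your route would require the strictly stronger condition $\alpha_t\le \norm{g_{w^*}}/(2Hc\norm{g_0})$, not the one in the statement.

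The paper's argument differs exactly at the place you simplify: it does \emph{not} discard the summand $(1-c^2)\norm{g_0}^2$ in the exact formula for $\norm{d^*}^2$. Keeping that term, and then bounding $g_{w^*}^\top g_0+c\norm{g_0}\norm{g_{w^*}}\ge(c-1)\norm{g_0}\norm{g_{w^*}}$ via Cauchy--Schwarz, the retained $(1-c^2)$ piece combines algebraically with the rest to give
\[
L_i(\theta_{t+1})-L_i(\theta_t)\;\le\;-\alpha_t(c-1)\norm{g_0}\norm{g_{w^*}}+\tfrac{H\alpha_t^2}{2}(c-1)^2\norm{g_0}^2,
\]
whose quadratic-in-$\alpha_t$ term carries the factor $(c-1)^2$ rather than something proportional to $c$; this is exactly what the stated step size $\alpha_t\le \norm{g_{w^*}}/\bigl(H(c-1)\norm{g_0}\bigr)$ is calibrated to absorb, yielding the $\tfrac12\alpha_t(c-1)\norm{g_0}\norm{g_{w^*}}$ per-step decrease. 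In short, the ``nonpositive'' term you threw away is precisely what converts the relevant constant from $c$ to $c-1$ and makes the final bookkeeping match the statement. (As an aside, the paper's Cauchy--Schwarz replacement also tacitly uses nonnegativity of the same bracket $1-H\alpha_t c\norm{g_0}/\norm{g_{w^*}}$, which the stated step size by itself does not guarantee; so this subtlety is shared by both arguments.)
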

Therefore, if we hae $L_i^* = \inf_{\theta\in\RR^m} L(\theta)>-\infty$ and $c > 1$,  then  we have 
$\alpha_t\norm{g_0(\theta_t)}\norm{g_{w_t^*}(\theta_t)} \to 0$ as $t\to \infty$, meaning that we have either $\alpha_t \to 0$, or $\norm{g_0(\theta_t)} \to 0$ or $\norm{g_{w_t^*}(\theta_t)} \to 0$.

In this case, the actual behavior of the algorithm depends on the specific choice of 
the step size. For example, if we take $ \alpha_t = \frac{\norm{g_{w^*_t}(\theta_t)}  }{H (c-1) \norm{g_0(\theta_t)}}$,  
then the result becomes 
$$
\sum_{t=0}^T 
\norm{g_{w_t^*}(\theta_t)}^2 \leq 2 H {\min_{i}(L_i(\theta_0)- L_i(\theta_{T+1}))}.
$$
which ensures $\norm{g_{w_t^*}(\theta_t)}^2 \to 0$.
\begin{proof} 
For any task $i\in[K]$, 
\bb 
L_i(\theta_{t+1}) - L_i(\theta) 
& \leq -\alpha_t g_i(\theta_t)^\top d^*(\theta_t) + \frac{H\alpha_t^2}{2} \norm{d^*(\theta_t)}^2 \\
& \leq -\alpha_t  \min_i g_i(\theta_t)^\top d^*(\theta_t) + \frac{H\alpha_t^2}{2} \norm{d^*(\theta_t)}^2 \\
& \leq -\alpha_t  
\left (g_{w^*_t}(\theta_t)  \tt g_0(\theta_t) + c \norm{g_0(\theta_t)} \norm{g_{w^*_t}(\theta_t)} \right )
+ \frac{H\alpha_t^2}{2} \norm{d^*(\theta_t)}^2 \ant{by \eqref{equ:strongduality}}\\
\ee 
Meanwhile, note that 
\bb 
\norm{d^*(\theta_t)}^2
& =\norm{g_0(\theta_t) +  \frac{c\norm{g_0(\theta_t)}}{\norm{g_{w^*_t}(\theta_t)}}{g_{w^*_t}(\theta_t)}}^2 \\
& 
=(c^2 + 1)\norm{g_0(\theta_t)}^2 +  
2\frac{c\norm{g_0(\theta_t)}}{\norm{g_{w^*_t}(\theta_t)}} 
g_0(\theta_t)\tt {g_{w^*_t}(\theta_t)} \\
& 
=2c \frac{\norm{g_0(\theta_t)}}{\norm{g_{w^*_t}(\theta_t)}} 
\left ( g_{w^*_t}(\theta_t)  \tt g_0(\theta_t) + c \norm{g_0(\theta_t)} \norm{g_{w^*_t}(\theta_t)} \right ) 
+ (1-c^2) \norm{g_0(\theta_t)}^2. 
\ee
Therefore, 
\bb 
& L_i(\theta_{t+1}) - L_i(\theta)  \\
& \leq -
 \alpha_t \left(1 - {H\alpha_t} c \frac{\norm{g_0(\theta_t)}}{\norm{g_{w^*_t}(\theta_t)}} \right  )
\left (g_{w^*_t}(\theta_t)  \tt g_0(\theta_t) + c \norm{g_0(\theta_t)} \norm{g_{w^*_t}(\theta_t)} \right ) 
+  \frac{H\alpha_t^2}{2}(c^2-1) \norm{g_0(\theta_t)}^2 \\
& \overset{(*)}{\leq} -
 \alpha_t \left(1 - {H\alpha_t} c \frac{\norm{g_0(\theta_t)}}{\norm{g_{w^*_t}(\theta_t)}} \right  )
 (c-1) \norm{g_0(\theta_t)} \norm{g_{w^*_t}(\theta_t)}  
-  \frac{H\alpha_t^2}{2}(c^2-1) \norm{g_0(\theta_t)}^2 \\ 
&
= - \alpha_t (c-1)  
\norm{g_0(\theta_t)} \norm{g_{w^*_t}(\theta_t)} 
  + \frac{H\alpha_t^2}{2}(c-1)^2  \norm{g_0(\theta_t)}^2 \\ 
 & \leq  -\frac{1}{2} \alpha_t (c-1)  
\norm{g_0(\theta_t)} \norm{g_{w^*_t}(\theta_t)}  \ant{assume $\alpha_{\red{t}} \leq  \frac{\norm{g_{w^*_t}(\theta_t)}  }{H (c-1) \norm{g_0(\theta_t)}},$ $c\geq 1$}
\ee 
where inequality (*) uses Cauchy-Schwarz inequality. 
Therefore, a telescoping sum gives 
$$
\sum_{t=0}^T \alpha_t \norm{g_0(\theta_t)}\norm{g_{w_t^*}(\theta_t)} \leq 2\frac{\min_{i}(L_i(\theta_0)- L_i(\theta_{T+1}))}{(c-1) },
$$
when $c\geq 1$. 
\end{proof}

\section{Experiment Details}
\label{apx:exp}

\subsection{Multi-Fashion+MNIST}
\label{apx:exp_mnist}
\paragraph{Experiment Details~~~} We follow the experiment setup from~\citep{mahapatra2020multi} and use the same shrinked LeNet that consists of the following layers as the shared base network: \textsc{Conv(1,5,9,1)}, \textsc{MaxPool2d(2)}, \textsc{ReLU}, \textsc{BatchNorm2d(5)}, \textsc{Conv2d(5,10,5,1)}, \textsc{MaxPool2d(2)}, \textsc{ReLU}, \textsc{BatchNorm1d(250)}, \textsc{Linear(250, 50)}. Then a task-specific linear head \textsc{Linear(50, 10)} is attached to the shared base for the MNIST and FashionMNIST prediction. We use Adam~\citep{kingma2014adam} optimizer with a 0.001 learning rate and 0.01 weight decay, and then train for 50 epochs with a batch size of 256. The training set consists of 120000 images of size 36x36 and the test set consists of 20000 images of the same size.

\subsection{Multi-task Supervised Learning}
\label{apx:exp_sl}
\paragraph{Experiment Details~~~} For the multi-task supervised learning experiments on the NYU-v2 and CityScapes datasets, we follow exactly the same setup from MTAN~\citep{liu2019end}. We describe the details in the following. We adopt the SegNet~\citep{badrinarayanan2017segnet} architecture as the backbone network and apply the attention mechanism from MTAN~\citep{liu2019end} on top of it. For the CityScapes dataset, we use the 7-class semantics labels. We train MTAN, Cross-Stitch, PCGrad and CAGrad with 200 epochs with a batch size of 2 for NYU-v2 and a batch size of 8 for CityScapes, using the Adam~\citep{kingma2014adam} optimizer with a learning rate of 0.0001. We further decay the learning rate to 0.00005 at the 100th epoch. As Liu et al. do not separately create a validation set, they average the test performance of each method in the last 10 epochs. We follow this and also average the test performance over the last 10 epochs, but additionally run over 3 seeds and calculate the mean and the standard error. We train CAGrad with $c \in \{0.1, 0.2, 0.3, 0.4, 0.5, 0.6, 0.7, 0.8, 0.9\}$ and pick the best $c$ using their corresponding averaged training performance ($c=0.4$ for NYU-v2 and $c=0.4$ for CityScapes).

We also provide the final test losses and the per-epoch training times of each method in Fig.~\ref{fig:nyu-comparison}. 

\begin{figure*}[ht]
    \centering
    \includegraphics[width=\textwidth]{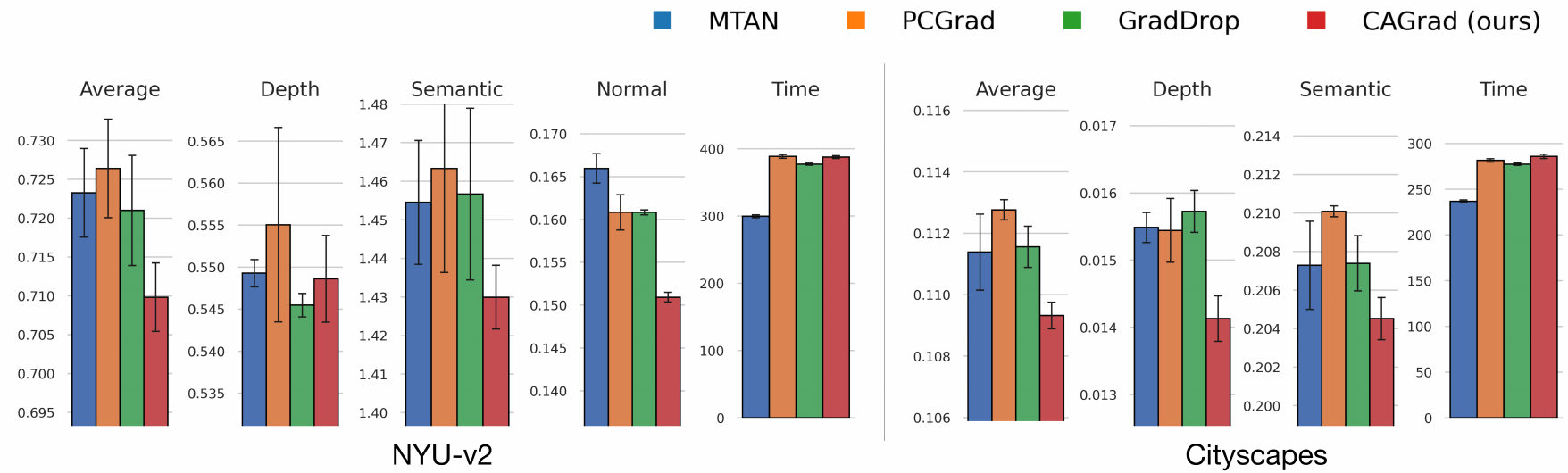}
    \vspace{-15pt}
    \caption{Test loss and training time comparison on NYU-v2 and Cityscapes.}
    \label{fig:nyu-comparison}
    \vspace{-5pt}
\end{figure*}

\paragraph{More Ablation Studies on NYU-v2 and CityScapes Datasets} We conduct the following additional studies on NYU-v2 and CityScapes datasets: 1) How do different methods perform when we additional apply the uncertain weight method~\citep{kendall2018multi}? 2) How do CAGrad perform with different values of $c$? 3) How does PCGrad perform when we enlarge the learning rate? Specifically we double the learning rate to 2e-4. Results are provided in Tab.~\ref{tab:nyu-v2-abl} and Tab.~\ref{tab:cityscapes-abl}. We can see that CAGrad perform consistently with different values of $0 < c < 1$. PCGrad with larger learning rate will not perform better. Under the uncertain weights, MTAN and PCGrad indeed perform better but CAGrad is still comparable or better than them.

\begin{table*}[t]
    \centering
    \resizebox{\textwidth}{!}{%
    \begin{tabular}{ccrrrrrrrrrc}
    \toprule
     & &  \multicolumn{2}{c}{Segmentation} & \multicolumn{2}{c}{Depth} & \multicolumn{5}{c}{Surface Normal} & \\
    \cmidrule(lr){3-4}\cmidrule(lr){5-6}\cmidrule(lr){7-11}
     \#P. & Method &  \multicolumn{2}{c}{(Higher Better)} &  \multicolumn{2}{c}{(Lower Better)}  & \multicolumn{2}{c}{\makecell{Angle Distance\\(Lower Better)}} & \multicolumn{3}{c}{\makecell{Within $t^\circ$\\(Higher Better)}} & $\Delta m \% \downarrow$\\
     & & mIoU  & Pix Acc  & Abs Err & Rel Err & Mean & Median & 11.25 & 22.5 & 30 & \\
     \midrule
     3 & Independent & 38.30 & 63.76 & 0.6754 & 0.2780 & 25.01 & 19.21 & 30.14 & 57.20 & 69.15 & \\
     \midrule
     $\approx$3 & Cross-Stitch~\citep{misra2016cross} & 37.42 & 63.51 & 0.5487 & 0.2188 & 28.85 & 24.52 & 22.75 & 46.58 & 59.56 & 6.96\\
    [0.15cm]
     $1.77$ & MTAN~\citep{liu2019end} & 39.29 & 65.33 & 0.5493 & 0.2263 & 28.15 & 23.96 & 22.09 & 47.50 & 61.08 & 5.59\\
    [0.15cm]
     $1.77$ & MGDA~\citep{sener2018multi} & 30.47 & 59.90 & 0.6070 & 0.2555 & 24.88 & 19.45 & 29.18 & 56.88 & 69.36 & 1.38\\
    [0.15cm]
     $1.77$ & PCGrad~\citep{yu2020gradient} (lr=1e-4) & 38.06 & 64.64 & 0.5550 & 0.2325 & 27.41 & 22.80 & 23.86 & 49.83 & 63.14 & 3.97\\
    [0.15cm]
     $1.77$ & PCGrad~\citep{yu2020gradient} (lr=2e-4) & 37.70 & 63.40 & 0.5871 & 0.2482 & 28.18 & 24.09 & 21.94 & 47.20 & 60.87 & 8.12\\
    [0.15cm]
     $1.77$ & GradDrop~\citep{chen2020just} & 39.39 & 65.12 & 0.5455 & 0.2279 & 27.48 & 22.96 & 23.38 & 49.44 & 62.87 & 3.58\\
    \midrule
     $1.77$ & CAGrad ($c$=0.2) & 39.15 & 65.45 & 0.5563 & 0.2295 & 26.74 & 21.93 & 25.17 & 51.55 & 64.70 & 1.55\\
    [0.15cm]
     $1.77$ & CAGrad ($c$=0.4) & 39.79 & 65.49 & 0.5486 & 0.2250 & 26.31 & 21.58 & 25.61 & 52.36 & 65.58 & 0.20\\
     [0.15cm]
     $1.77$ & CAGrad ($c$=0.6) & 39.54 & 65.60 & 0.5340 & 0.2199 & 25.87 & 20.94 & 25.88 & 53.78 & 67.00 & -1.36\\
    [0.15cm]
     $1.77$ & CAGrad ($c$=0.8) & 39.18 & 64.97 & 0.5379 & 0.2229 & 25.42 & 20.47 & 27.37 & 54.73 & 67.73 & -2.29\\
    \midrule
     $1.77$ & MTAN~\citep{liu2019end} (Uncert. Weights) & 38.74 & 64.70 & 0.5360 & 0.2243 & 26.52 & 21.71 & 25.50 & 52.02 & 65.14 & 0.75\\
    [0.15cm]
     $1.77$ & PCGrad~\citep{yu2020gradient} (Uncert. Weights) & 37.81 & 64.35 & 0.5318 & 0.2242 & 26.53 & 21.73 & 25.45 & 51.98 & 65.16 & 1.04\\
    [0.15cm]
     $1.77$ & CAGrad ($c$=0.2) (Uncert. Weights) & 38.87 & 65.19 & 0.5357 & 0.2227 & 26.38 & 21.64 & 25.66 & 52.21 & 65.39 & 0.319\\
    [0.15cm]
     $1.77$ & CAGrad ($c$=0.4) (Uncert. Weights) & 38.89 & 64.98 & 0.5313 & 0.2242 & 25.71 & 20.72 & 26.89 & 54.14 & 67.13 & -1.59\\
     [0.15cm]
     $1.77$ & CAGrad ($c$=0.6) (Uncert. Weights) & 39.80 & 65.32 & 0.5334 & 0.2242 & 25.69 & 20.91 & 26.89 & 54.14 & 67.13 & -1.59\\
    [0.15cm]
     $1.77$ & CAGrad ($c$=0.8) (Uncert. Weights) & 39.20 & 65.15 & 0.5322 & 0.2202 & 25.28 & 20.17 & 27.83 & 55.41 & 68.25 & -3.14\\
    \bottomrule 
    \end{tabular}
    }
    \caption{Multi-task learning results on NYU-v2 dataset. \#P denotes the relative model size compared to the vanilla SegNet. Each experiment is repeated over 3 random seeds and the mean is reported.}
    \label{tab:nyu-v2-abl}
\end{table*}

\begin{table*}[ht]
    \centering
    \resizebox{0.65\textwidth}{!}{%
    \begin{tabular}{ccrrrrc}
    \toprule
     & & \multicolumn{2}{c}{Segmentation} & \multicolumn{2}{c}{Depth} & \\
    \cmidrule(lr){3-4}\cmidrule(lr){5-6}
     \#P. & Method &  \multicolumn{2}{c}{(Higher Better)} &  \multicolumn{2}{c}{(Lower Better)} & $\Delta m \% \downarrow$ \\
     & & mIoU  & Pix Acc  & Abs Err & Rel Err & \\
     \midrule
    2 & Independent & 74.01 & 93.16 & 0.0125 & 27.77 & \\
     \midrule
    $\approx$3 & Cross-Stitch~\citep{misra2016cross} & 73.08 & 92.79 & 0.0165 & 118.5 & 90.02\\
    [0.15cm]
    $1.77$ & MTAN~\citep{liu2019end}  & 75.18 & 93.49 & 0.0155 & 46.77 & 22.60\\
    [0.15cm]
    $1.77$ & MGDA~\citep{sener2018multi}  & 68.84 & 91.54 & 0.0309 & 33.50 & 44.14\\
    [0.15cm]
    $1.77$ & PCGrad~\citep{yu2020gradient} & 75.13 & 93.48 & 0.0154 & 42.07  & 18.29\\
    [0.15cm]
    $1.77$ & GradDrop~\citep{chen2020just}  & 75.27 & 93.53 & 0.0157 & 47.54 & 23.73\\
    \midrule 
    $1.77$ & CAGrad ($c$=0.2) & 75.18 & 93.49 & 0.0140 & 40.12 & 13.69 \\
    [0.15cm]
    $1.77$ & CAGrad ($c$=0.4) & 75.16 & 93.48 & 0.0141 & 37.60 & 11.64 \\
    [0.15cm]
    $1.77$ & CAGrad ($c$=0.6) & 74.31 & 93.39 & 0.0151 & 34.84 & 11.46 \\
    [0.15cm]
    $1.77$ & CAGrad ($c$=0.8) & 74.95 & 93.50 & 0.0143 & 36.05 & 10.74 \\
    \midrule
    $1.77$ & MTAN~\citep{liu2019end} (Uncert. Weights) & 75.02 & 93.36 & 0.0139 & 35.56 & 9.48\\
    [0.15cm]
    $1.77$ & PCGrad~\citep{yu2020gradient} (Uncert. Weights)& 74.68 & 93.36 & 0.0135 & 34.00  & 7.26\\
    [0.15cm]
    $1.77$ & CAGrad ($c$=0.2) (Uncert. Weights) & 75.05 & 93.45 & 0.0140 & 34.33 & 8.40 \\
    [0.15cm]
    $1.77$ & CAGrad ($c$=0.4) (Uncert. Weights) & 74.90 & 93.46 & 0.0141 & 34.84 & 9.13 \\
    [0.15cm]
    $1.77$ & CAGrad ($c$=0.6) (Uncert. Weights) & 74.89 & 93.45 & 0.0136 & 35.17 & 8.48 \\
    [0.15cm]
    $1.77$ & CAGrad ($c$=0.8) (Uncert. Weights) & 75.38 & 93.48 & 0.0141 & 35.54 & 9.63 \\
    \bottomrule
    \end{tabular}
    }
    \caption{Multi-task learning results on CityScapes Challenge. Each experiment is repeated over 3 random seeds and the mean is reported.}
    \label{tab:cityscapes-abl}
    \vspace{-5pt}
\end{table*}

\subsection{Multi-task Reinforcement Learning}
\label{apx:exp_rl}
\paragraph{Experiment Details~~~} The multi-task reinforcement learning experiments follow the exact setup from CARE~\citep{sodhani2021multi}. Specifically, it is built on top of the MTRL codebase~\citep{Sodhani2021MTRL}. We consider the MT10 and MT50 benchmarks from the MetaWorld environment~\citep{yu2020meta}. A visualization of the 50 tasks from MT50 is provided in Fig.~\ref{fig:mt50}.
\begin{figure}[ht]
    \centering
    \includegraphics[width=0.8\textwidth]{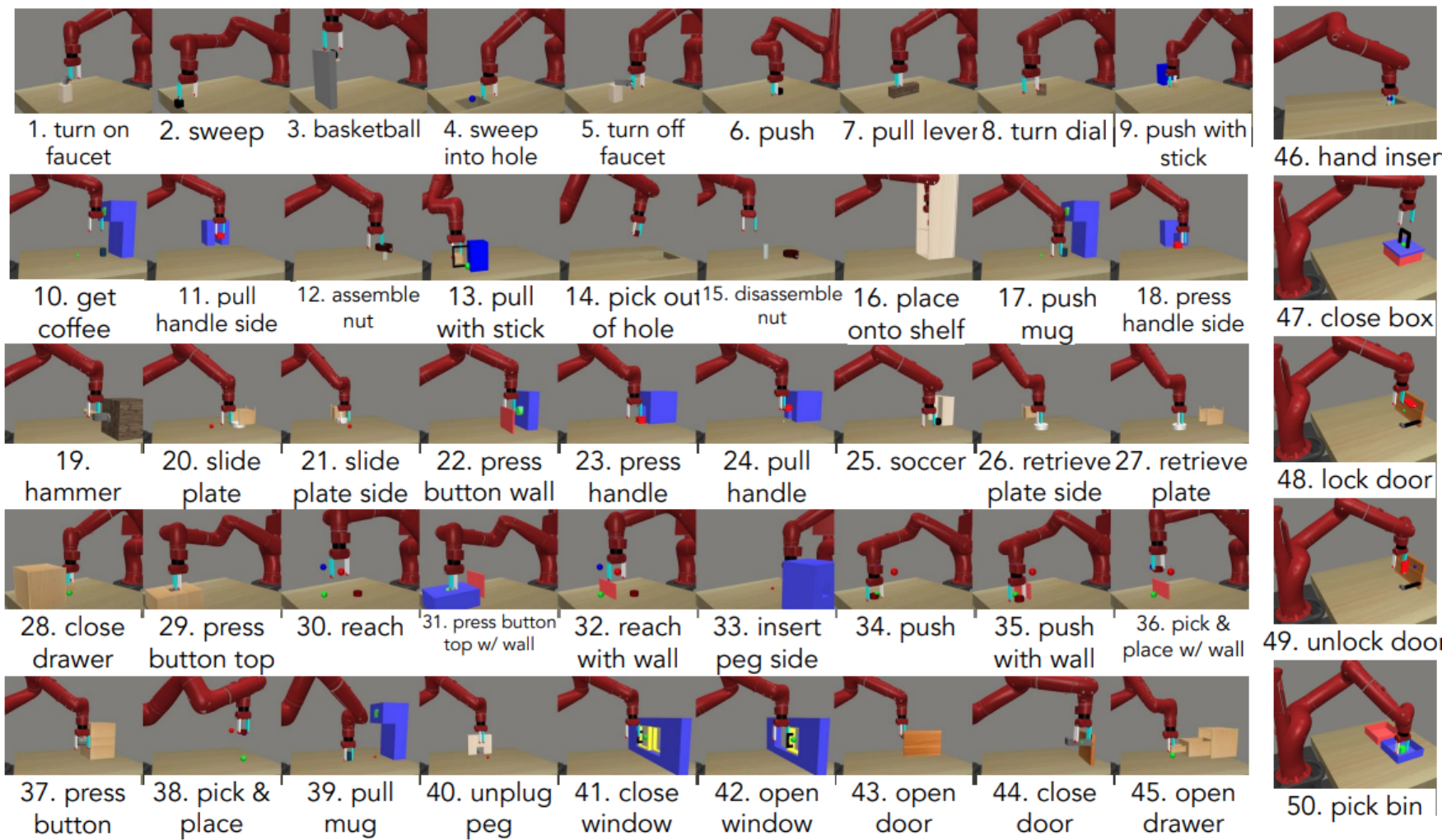}
    \caption{The 50 tasks in MT50 benchmark~\citep{yu2020meta}.}
    \label{fig:mt50}
\end{figure}
The MT10 benchmark consists of a subset of 10 tasks from the MT50 task pool. For all methods, we use Soft Actor Critic (SAC)~\citep{haarnoja2018soft} as the underlying reinforcement learning algorithm. All methods are trained over 2 million steps with a batch size of 1280. Following CARE~\citep{Sodhani2021MTRL}, we evaluate each method once every 10000 steps, and report the highest average test performance of a method over 10 random seeds over the entire training stage. For CAGrad-Fast, we sub-sample 4 and 8 tasks randomly at each optimization step as the $S$ (See Eq.~\eqref{eq:CAGrad-fast}) for the MT10 and MT50 experiments. For CAGrad, since MT10 and MT50 have 10 and 50 tasks, much more than the number of tasks in supervised MTL, so instead of using standard optimization library to solve the CAGrad objective, we apply 20 gradient descent steps to approximately solve the objective. The gradient descent is performed with a learning rate of 25 for MT10 and 50 for MT50, with a momentum of 0.5. We search the best $c$ from $\{0.1, 0.5, 0.9\}$ for MT10 and MT50 ($c=0.9$ for MT10 and $c=0.5$ for MT50). The computation efficiency is compared in Tab.~\ref{tab:mt-time}.\\
\begin{table}[htbp]
    \centering
    \begin{tabular}{lrr}
    \toprule
    Method & MT10 Time (sec) & MT50 Time (sec)\\
    \midrule
    PCGrad & 9.7 & 59.8\\
    CAGrad & 10.3 & 27.8\\
    CAGrad-Fast & \textbf{4.8} & \textbf{11.4} \\
    \bottomrule
    \end{tabular}
    \vspace{5pt}
    \caption{The training time per update step for PCGrad, CAGrad and CAGrad-Fast on MT10/50.}
    \vspace{-10pt}
    \label{tab:mt-time}
\end{table}
In principle, PCGrad should have the same time complexity as CAGrad. However, in practice, PCGrad projects the gradients following a random ordering of the tasks in a sequential fashion (See Alg.~\ref{alg:pcgrad}), so it requires a for loop over that task ordering, which makes it slow for a large number of tasks. Combined with the results from Tab.~\ref{tab:metaworld}, we see that CAGrad-Fast achieves comparable or better results than PCGrad with a roughly \textcolor{red}{2x} and \textcolor{red}{5x} speedup on MT10 and MT50.

\subsection{Semi-Supervised Learning with Auxiliary Tasks}
\label{apx:exp_ssl}
\paragraph{Experiment Details~~~}We provide the hyperparameters for reproducing the experiments in our main text. All the methods are applied upon the original ARML baseline, with the same configuration in~\citep{shi2020auxiliary}. Specifically, the batch size is $256$ and the optimizer is Adam. The learning rate is initialized to $0.005$ in the first $160,000$ iterations and decay to $0.001$ in the rest iterations. The backbone networks is a $\texttt{WRN-28-2}$ model. To stablize the training process, the features are extracted by a moving-averaged model like in ~\citep{tarvainen2017mean} with a moving-average factor of $0.95$. For PCGrad and MGDA, we use their official implementation without any change. For CAGrad (our method), we fix $c=0.1$ in all the experiments. The labeled images are randomly selected from the whole training set, and we repeat the experiments for 3 times on the same set of labeled images. We report the test accuracy of the model with the highest validation accuracy.

\paragraph{Training Losses~~~}We analyze the training losses of different methods to demonstrate the difference between these optimization methods. We report the losses, $L_{CE}$, $L_{aux}^1$ and $L_{aux}^2$, of the last epoch, when the number of labeled images is $2,000$. The losses are listed in Tab.~\ref{tab:semi_loss}. We have two key observations: (1) MGDA totally ignores the main task $L_{CE}$, yet it has the smallest loss on the second auxiliary task $L_{aux}^2$. This implies MGDA finds a sub-optimal solution on the Pareto front. (2) PCGrad and CAGrad can both decrease the averaged loss $L_0$ compared with the baseline ARML, however, CAGrad yields a smaller $L_0$ than PCGrad. 

\begin{table}[htbp]
    \centering
    \begin{tabular}{lcccc}
    \toprule
    Method & $L_{CE}$ & $L_{aux}^1$ & $L_{aux}^2$ & $L_0$\\
    \midrule
    ARML~\citep{shi2020auxiliary} & \textbf{0.0} \fs{0.0}  & 0.0574 \fs{0.0036} & -0.4946 \fs{0.0010} & -0.4372 \fs{0.0046} \\
    ARML + PCGrad~\citep{yu2020gradient} & \textbf{0.0} \fs{0.0}  & 0.0494 \fs{0.0088} & -0.4943 \fs{0.0007} & -0.4449 \fs{0.0095} \\
    ARML + MGDA~\cite{sener2018multi} & 0.407 \fs{0.018}  & 0.0453 \fs{0.0049}  & \textbf{-0.4980} \fs{0.0007} & -0.0463 \fs{0.0233}\\
    ARML + CAGrad (Ours) & \textbf{0.0} \fs{0.0}  & \textbf{0.0419} \fs{0.0034} & -0.4926 \fs{0.0023} & \textbf{-0.4507} \fs{0.0058} \\ 
    \bottomrule
    \end{tabular}
    \vspace{5pt}
    \caption{The Training Losses in the Last Epoch when the number of the labeled images is $2,000$. Values that are smaller than $10^{-6}$ are replaced by $0$. We report the averaged losses over 3 independent runs for each method, and mark the smallest losses in bold.}
    \label{tab:semi_loss}
\end{table}

\end{document}